\documentclass[ss]{imsart}

\RequirePackage[OT1]{fontenc}
\RequirePackage{amsthm,amsmath,amssymb,bm}
\RequirePackage[numbers]{natbib}
\RequirePackage[colorlinks,citecolor=blue,urlcolor=blue]{hyperref}

\pubyear{2006}
\volume{0}
\issue{0}
\firstpage{1}
\lastpage{8}

\startlocaldefs
\numberwithin{equation}{section}
\theoremstyle{plain}

\endlocaldefs

\usepackage[letterpaper]{geometry}
\usepackage[parfill]{parskip}
\usepackage{alex}
\usepackage{mathtools}
\usepackage{cases}
\usepackage{amsfonts}       
\usepackage{xr-hyper}
\usepackage[utf8]{inputenc} 
\usepackage[T1]{fontenc}    
\usepackage{nicefrac}       
\usepackage{booktabs}       
\usepackage{comment}
\usepackage{multibib}
\newcites{app}{References}
\usepackage[english]{babel}
\usepackage{mdwlist}
\usepackage{xspace}
\usepackage{multirow}
\usepackage{microtype}
\usepackage{subfigure}
\usepackage{algorithm}
\usepackage[noend]{algpseudocode}
\usepackage{color}
\usepackage{appendix}

\usepackage{url}
\usepackage{hyperref}       
\pdfstringdefDisableCommands{\def\Cref#1{#1}}

\usepackage{cleveref}
\crefformat{equation}{(#2#1#3)}
\crefrangeformat{equation}{(#3#1#4) to~(#5#2#6)}
\crefname{equation}{}{}
\Crefname{equation}{}{}

\crefname{definition}{\textbf{definition}}{definitions}
\Crefname{definition}{Definition}{Definitions}
\crefname{assumption}{\textbf{assumption}}{assumptions}
\Crefname{assumption}{Assumption}{Assumptions}

%
\definecolor{maroon}{RGB}{192,80,77}

\newtheorem{theorem}{Theorem}
\newtheorem{lemma}[theorem]{Lemma}
\newtheorem{proposition}[theorem]{Proposition}

\newtheorem{definition}[theorem]{Definition}
\newtheorem{example}[theorem]{Example}

\newtheorem{remark}[theorem]{Remark}


\newcommand{\minimize}{\mathop{\mathrm{minimize}}}
\newcommand{\red}[1]{\textcolor{red}{#1}}

\def\P{\mathbb{P}}

\def\sign{\mathrm{sign}}

\def\diag{\mathrm{diag}}

\def\R{\mathbb{R}}

\def\cF{\mathcal{F}}

\def\cL{\mathcal{L}}

\def\cP{\mathcal{P}}

\def\cS{\mathcal{S}}
\def\cT{\mathcal{T}}

\def\TV{\mathrm{TV}}

\begin{document}

\begin{frontmatter}
\title{Attributing Hacks}
\runtitle{Attributing Hacks}

\begin{aug}
\author{\fnms{Ziqi} \snm{Liu}\thanksref{t1}\ead[label=e1]{ziqiliu@gmail.com}}
\author{\fnms{Alexander} \snm{Smola}\thanksref{t2}\ead[label=e2]{alex@smola.org}}
\author{\fnms{Kyle} \snm{Soska}\thanksref{t3}\ead[label=e3]{ksoska@cmu.edu}}
\\
\author{\fnms{Yu-Xiang} \snm{Wang}\thanksref{t2,t3}\ead[label=e4]{yuxiangw@cs.cmu.edu}}
\author{\fnms{Qinghua} \snm{Zheng}\thanksref{t4}\ead[label=e5]{qhzheng@xjtu.edu.cn}}
\and
\author{\fnms{Jun} \snm{Zhou}\thanksref{t1}\ead[label=e6]{jun.zhoujun@alibaba-inc.com}}


\thankstext{t1}{AI Department, Ant Financial Group, \{ziqiliu,jun.zhoujun\}@antfin.com}
\thankstext{t2}{Amazon AWS, \{smola,yuxiangw\}@amazon.com}
\thankstext{t3}{Carnegie Mellon University, ksoska@cmu.edu, yuxiangw@cs.cmu.edu}
\thankstext{t4}{Xi'an Jiaotong University, qhzheng@xjtu.edu.cn}
\runauthor{Ziqi Liu et al.}

\affiliation{Some University and Another University}

\end{aug}

\begin{abstract}
In this paper we describe an algorithm for estimating the provenance
of hacks on websites. That is, given properties of sites and the
temporal occurrence of attacks, we are able to attribute individual
attacks to joint causes and vulnerabilities, as well as estimating the evolution of these vulnerabilities over time.
Specifically, we use hazard regression with a time-varying additive hazard function parameterized in a generalized linear form. The activation coefficients on each feature are continuous-time functions over time. We formulate the problem of learning these functions as a constrained variational maximum likelihood estimation problem with total variation penalty and show that the optimal solution is a $0$th order spline (a piecewise constant function) with a finite number of adaptively chosen knots. This allows the inference problem to be solved efficiently and at scale by solving a finite dimensional optimization problem.
Extensive experiments on real data sets show that our method significantly outperforms Cox's proportional hazard model.  We also conduct case studies and verify that the fitted functions
of the features respond to real-life campaigns.
\end{abstract}

\begin{keyword}[class=MSC]
\kwd[Primary ]{60K35}
\kwd{60K35}
\kwd[; secondary ]{60K35}
\end{keyword}

\begin{keyword}
\kwd{hazard regression}
\kwd{nonparametrics}
\kwd{trend filtering}
\end{keyword}
\tableofcontents
\end{frontmatter}

\section{Introduction}

Websites get hacked whenever they are subject to a vulnerability that
is known to the attacker, whenever they can be discovered efficiently,
and, whenever the attacker has efficient means of hacking at his
disposal. This combination of \emph{knowledge}, \emph{opportunity},
and \emph{tools} is quite crucial in shaping the way a group of sites
receives unwanted attention by hackers.

Unfortunately, as an observer we are not privy any of these
three properties. In fact, we usually do not even know the exact time
$t_s$ a site $s$ was hacked. Instead, all we observe is that a
compromised site will eventually be listed as such on one (or more)
blacklists. That is, we know that by the time a site lands on the
blacklist it definitely has been hacked. However, there is no
guarantee that the blacklists are comprehensive nor is there any
assurance that the blacklisting occurs expediently. Another shortcoming of blacklists
is that they do not reveal which aspect of the website was to blame.

On the other hand, metadata exists for each website and it allows us
to measure the potential vulnerability of the websites quantitatively. This
includes specific string snippets on websites that are indicative of
certain versions of software which might have been identified as
exploitable or containing bugs that lead to possible security breach. An
interesting method that uses these features to identify websites at
risk was recently proposed by \citet{Soska2014}. However, it is unclear
how each of these features contribute to the ``hazard'' of a
particular website getting hacked at a given time. 

In this paper, we propose a novel hazard regression model to
address this problem. Specifically, the model provides a clear
description of the probability a site getting hacked conditioned
on its time-varying features, therefore allowing prediction tasks
such as finding websites at risk, or inferential tasks such as
attributing attacks to certain features as well as identifying
change points of the activations of certain features to be
conducted with statistical rigor.

\paragraph{Related work.} The primary strategy for identifying web-based malware has been to detect an active infection based on features such as small iFrames~\citep{Provos:Security08}. This approach has been pursued by both academia \citep[e.g.,][]{Borgolte:CCS13,Invernizzi:S&P12} and industry \citep[e.g.,][]{safebrowsing,siteadvisor,NortonSafeWeb}. While intuitive, this approach suffers from being overly reactive, and defenders must compete against adversaries in an arms race to detect increasingly convoluted and obscure forms of malice.

\citet{Soska2014} propose a data driven (linear classification) approach to identify
software packages that were being targeted by attackers to predict the
security outcome of websites.

Compared with \cite{Soska2014}, our method
is able to predict the time a site will be hacked in a survival analysis framework.
Our method naturally handles censoring of observations
(i.e.\ inconsistency of exact hacking time and the time listed on
blacklists), automatically identifies a small number of features as exploits and allows the activation
coefficients on each feature to be functions over continuous time. 


Finally, our hazard regression model is quite generic and much more powerful than the widely-used Cox model \citep{Cox1972} in our application, therefore it can be viewed as a novel and alternative way of estimating nonparametric
hazard functions at scale, and be used as a drop-in replacement in many other applications having similar structures. Towards the end of the paper, we provide one such application on studying the user dropout rate (churn rate) of Alipay --- a major online payment services used by hundreds of millions of people --- and illustrate some interesting insight.


\section{Background}

Our work is based on two key sets of insights: the specific way how
vulnerabilities are discovered, exploited and communicated in the
community, and secondly, the mapping of these findings into a specific
statistical model. 

\subsection{Attacks on Websites}
We start by describing the typical economics of hackers and websites.

Exploits are first discovered by highly skilled individuals
(hackers) who tend to reserve these exploits for their own purposes for an extended
period of time. Typically, a hacker will retain an exploit as long as there is an ample supply of vulnerable sites
that can be discovered efficiently. Once the \emph{opportunity} for
such hacks diminishes due an exhausted supply, the exploits are often
sold or freely published to the benefit of the author's reputation.

Once this knowledge enters the public domain, the availability of
available tools increases with it. It is added to the repertoire of
popular kits, at the ready disposal of ``script kiddies'' who will
attempt to attack the remaining sites. The increased availability of
\emph{tools} often offsets the reduced \emph{opportunity} to yield a
secondary wave of infections. 

An important aspect in the above scenario is the way how sites are
discovered. Quite frequently this is accomplished by web queries for
specific strings in sites, indicative of a given vulnerability (e.g.\
database, content management system (CMS), server, scripting language). In other words, string
matches are excellent \emph{features} to determine the vulnerability
of a site and are therefore quite indicative of the likelihood that
such a site will be attacked. Unfortunately, we are not privy to the
search strings a potential attacker might issue. However, we can use
existing fingerprints to \emph{learn} such sequences, e.g.\ the tags
and attributes in the pages of a site.

In a nutshell, the above informs the following statistical
assumptions on the nature of website vulnerabilities.
Firstly, sites are only effectively vulnerable once an exploit has been discovered. 
Second, changes in attack behavior are discrete
rather than gradual. In the following we design a statistical
estimator capable of adapting to this specific profile.


\subsection{Hazard Regression}

Hazard regression is commonly used in survival analysis of patients
suffering from potentially fatal diseases. There, one aims to estimate
the chances of survival of a particular patient with covariates
(attributes) $x$, as a function of time, such as to better understand
the effects of $x$. Unfortunately, each patient only has one life, and
possibly different attributes $x$, hence, it is impossible to estimate
the fatality rate directly. 

Instead, one assumes that the hazard rate $\lambda(x,t)$ governs the
instantaneous rate of dying of any $x$ at any given time $t$:
\begin{align}
  \lambda(t) = \lim_{dt\to 0} \frac{p(t\leq T < t+dt|T\geq t)}{dt} 
  = \lim_{dt\to 0} \frac{p(t\leq T < t+dt)}{dt} \cdot \frac{1}{p(T\geq t)}
\end{align}
That is, the density of dying
at time $t$ is given by 
\begin{align}
\label{eq:hazarddensity}
p(t|x) = \lambda(x,t)\underbrace{p\rbr{\text{survival until $t$}|x}}_{F(t|x)}.
\end{align}
This leads to a differential equation for the survival probability
with solution
\begin{align}
\label{eq:hazardintegral}
F(t|x) = \exp\rbr{-\int_0^t \lambda(x,\tau) d\tau}.
\end{align}
Here we assumed, without loss of generality, that time starts at
$0$. Note that a special case of the above is $\lambda(x,t) =
\lambda_0$, in which case we have $F(t,x) = e^{-t \lambda_0}$. This is
the well-known nuclear decay equation (also an example of survival
analysis). 

In our case, death amounts to a site being infected and $\lambda(x,t)$
is the rate at which such an infection occurs. An extremely useful
fact of hazard regression is that it is additive. That is, if there
are two causes with rates $\lambda$ and $\gamma$ respectively,
\eq{eq:hazarddensity} allows us to add the rates. We tacitly assume
here that once a site is infected, the attacker will take great care
to keep further attackers out, or at least, it will remain blacklisted
as long as it is infected in some manner.
In terms of \eq{eq:hazardintegral} we have 
\begin{align}
\label{eq:additivehazard}
F(t|x) = \exp\rbr{-\int_0^t \lambda(x,\tau) + \gamma(x,\tau) d\tau}\\\nonumber
\text{and\,\,\,\,\,\,\,\,\,\,}
p(t|x) = \sbr{\lambda(x,t) + \gamma(x,t)} F(t,x).
\end{align}
The reason why this is desirable in our case follows from the fact
that we may now model $\lambda$ as the sum of attacks and can treat
them as if they were independent in the way they affect sites. 

One challenge in our analysis is the fact that we may not always
immediately discover whether a site has been taken over. The
probability that this happens in some time interval $[t_1, t_2]$ is
given by $F(t_1|x) - F(t_2|x)$, i.e.\ by the difference between the
cumulative distribution functions. 

Finally, the absence of evidence (of an infection) should not be
mistaken as evidence of absence of such. In other words, all we know
is that the site survived until time $T$. By construction, their
probability is thus given by $F(T|x)$. In summary, given intervals
$[t_i, T_i]$ of likely infection for site $i$, at time $T$ we have the
following likelihood for the observed data:

\begin{align}
p(\mathrm{sites}|T) = \prod_{i \in \mathrm{hacked}} \sbr{F(t_i,
	x_i) - F(T_i, x_i)} \prod_{i \not\in \mathrm{hacked}} F(T,x_i).
\end{align}

Up to this point, the model is completely general and little assumptions are made. The key of statistical modeling boils down to specify a tractable parametric or nonparametric form of the hazard rate function $\lambda(x,t)$, which by construction only needs to be nonnegative and obeys that $\int_{0}^\infty \lambda(x,t)dt =\infty$ for all possible $x$.

Arguably the most commonly used hazard regression model is the Cox's proportional hazard model, where $\lambda(t|x)=\lambda_0(t) \exp(w^\top x)$~\cite{Cox1972}. The linear dependence on feature vector $x$ makes it very appealing for interpretability and inference and leaving $\lambda_0(t)$ unspecified allows the model to handle global variations over time that is not captured by covariates $x$.

There is a large body of work devoted to extending the Cox model by coming with 
useful specifications of the hazard rate as more generic functions of
the covariate $x$ and time $t$ \cite{sauerbrei2007new,Bradic2012,perperoglou2013reduced}, and inventing techniques to address the curse-of-dimensionality associated with nonparametric modeling \cite{Tibshirani1997,verweij1995time,perperoglou2014cox,hastie1990generalized}. 
We cannot possibly enumerate these work exhaustively, so we encourage curious readers to check out the wonderful textbooks \cite{klein2005survival,therneau2013modeling}, the documentation for the ``survival'' package in R \cite{therneau2017package} and the references therein. Part of our contribution in this paper is to connect various bits and pieces of the statistical literature for the task of modeling hacker activities.

\subsection{Trend Filtering}
Trend filtering \citep{Kim2009,Tibshirani2014} is a class of nonparametric regression estimators that has precisely the required property. It is minimax optimal for the class of functions $[0,1]\rightarrow \R$ whose $\alpha$th order derivative has bounded total variation. In particular, it has the distinctive feature that when $\alpha=0,1$ it produces piecewise constant and piecewise linear estimates (splines of order $0$ and $1$) and when $\alpha\geq 2$ it gives piecewise smooth estimates. The local adaptivity stems from the sparsity inducing regularizers that chooses a small but unspecified number of knots. When $\alpha=0$, trend filtering reduces to the fused lasso \citep{tibshirani2005sparsity} which solves
$$
\argmin_\beta \cL(\beta)  + \gamma \sum_{\ell=1}^{T-1} |\beta_{\ell+1}-\beta_\ell|.
$$
for a given loss function $\cL$.
The advantage of this model is that each discrete change in the rate
function effectively corresponds to the discovery or the increased (or decreased)
exploitation of a vulnerability --- after all, the \emph{rate} of infection
should not change unless new vulnerabilities are discovered or a patch is released.

There has been an increasingly popular body of work on extending of trend filtering to estimate functions with heterogeneous smoothness over graphs \citep{wang2016trend}, in multiple dimensions (e.g., images, videos) \citep{sadhanala2016total} and with additive structures \citep{sadhanala2017additive}. It will be clear later that our proposed method can be considered an additive survival trend filtering model, which as far as we know, is the first time trend filtering (or additive trend filtering) is used for (time-varying) survival analysis.


\section{Attributing Hacks}

We will now assemble the aforementioned tools into a
joint model for attributing hacks.

\subsection{Additive hazard function and variational maximum likelihood}\label{sec:stat}
Given the hazard function $\lambda(t,x_i)$
of each website $i \in \{1,...,n\}$ with feature vector ${x}_i(t) \in \RR^d$ at time $t$, we have the
following survival problem:
\begin{equation*}
{\max} \prod_{i \in \mathcal{B}} p(t_i \leq \tau_i < T_i)
\prod_{i \notin \mathcal{B}} p(\tau_i > T)
\end{equation*}
where $\tau_i$ is the \emph{unobserved} random variable indicating the exact
time that website $x_i$ is being hacked. All we know for websites on
the blacklist\footnote{A blacklist in this context is a list of websites maintained by a third party which are confirmed to be either malicious, compromised, or otherwise adversarial according to the expertise of the curator. Entries in a blacklist always contain the website in question, but are also furnished timestamps of the security event and information regarding the nature of the malice.} is the time already
been hacked $T_i$ and the last time it was alive $t_i$. This
is what we call an ``interval-censored'' observation.
Time $T$ denotes the end of the observation interval, e.g., now. Websites that were still alive at $T$ are considered ``right censored'' because all we know is that their hack time will be beyond of $T$. Under the survival analysis framework, we have
\begin{align}
  \nonumber
  p(\tau_i > T) =& e^{-\int_0^{T} \lambda(t,x_i(t))dt} \\
  p(t_i \leq \tau_i < T_i) =& p(\tau_i \geq t_i) - p(\tau_i \geq T_i) 
  = e^{-\int_0^{t_i} \lambda(t,x_i(t)) dt} 
  - e^{-\int_{0}^{T_i}\lambda(t, x_i(t)) dt}\label{eq:cumulative_repre}
\end{align}
It remains to specify the hazard function. In our setting, $x$ is a high-dimensional non-negative feature vector, so we need to impose further structures on the hazard function $\lambda(x,t)$ to make it tractable. We thus make an additive assumption and expand the hazard function into an inner product 
$$
\lambda(x,t) = \langle x(t), w(t)\rangle  = w_0(t)+\sum_{i=1}^d x_i(t)w_i(t). 
$$
This is still an extremely rich class of functions as $x_i(t)$ can be different over time and $w_i(t)$ is allowed to be any univariate nonnegative functions\footnote{A natural alternative formulation would be to take $\lambda(x,t) = \exp(w_0(t)+\sum_{i=1}^d x_i(t)w_i(t))$ as in Cox's model, so we do not have to impose any constraints on feature $x$ and coefficient vector $w$. In this paper, since the features are all $\{0,1\}$ indicators of the existence of certain html snippets, so we naturally have nonnegative feature vectors and also it seems more reasonable to assume the contribution to the hazard from each feature adds up linearly, rather multiplied together exponentially.}. Leaving it completely unconstrained will inevitably overfit any finite data set. However, standard nonparametric assumptions on $w_i(t)$ would require the function to be Lipschitz continuous or even be higher order differentiable. These assumptions makes it a poor fit for modeling the sharp changes of $w_i(t)$ in response to sparse events such as an release of an exploit on hacker forums.

\paragraph{Standard and monotone model with Total Variation penalty.}
To address this issue, we propose to penalize the total variation (TV) of $w_i$ for each $i$ and optionally, impose a monotonicity constraint on these functions.
This gives rise to a variational penalized maximum likelihood problem below:
\begin{equation}
\begin{aligned}\label{eq:variational}
&\minimize_{(w_0,w_1,...,w_d) \in \cF^{d}} \;&&\;\sum_{i=1}^n\ell(\{x_i, z_i, \psi_i\}; {w}) + \gamma\sum_{j=0}^d \TV(w_j) \\
&\text{Subject to:} \;&&\; w_j(t)\geq 0  \text{ for any } j\in [d],t\in \R.\\
& \;&&\text{ (if monotone) }\;w_j(t+\delta) -w_j(t)\geq 0
\text{ for any } j\in [d],t\in \R, \delta\in \R_+ 
\end{aligned}
\end{equation}
where $\ell$ is the negative log-likelihood functional; $z_i$ is the indicator of censoring type for observation $x_i$, i.e.
interval-censored or right-censored; $\psi_i := \{t_i, T_i, T\}$ 
is the associated censoring time;$\cF$ are the set of  all functions $[0,T]\rightarrow \R$; and $w_j(t)$ is the evaluation of
function $w_j$ at time $t$. We define total variation as follows:
\begin{definition}[Total Variation]
	The total variation of a real-valued function $f$ defined on interval $[a,b]\subset \R$ is 
	$$
	\TV(f) = \sup_{\cP\in \left\{  P=\{t_0,...,t_{n_P}\} \middle| P \text{ is a partition of }[a,b]\right\}} \sum_{i=1}^{n_P-1} |f(t_{i+1})-f(t_i)|.
	$$
\end{definition}
When $f$ is differentiable almost everywhere, total variation reduces
to the $L_1$ norm of the first derivative
$$
\nbr{f}_{L_1} = \int_a^b \abr{\partial_t f(t)} dt.
$$
This should be compared to the squared $L_2$ norm penalty with degree $1$, commonly used as a variational form of the smoothing splines:
$$
\nbr{f}_{L_2}^2 =  \int_{a}^b  \sbr{\partial_t f(t)}^2 dt.
$$
By construction, a piecewise constant function $f$ would have small
total variation but unbounded $L_2$ norm of its first derivative. 

We now make two remarks on the constraints.
First of all, the nonnegativity constraints is necessary to ensure the hazard function $\lambda(x)$ to be a valid hazard function for all $x\geq 0$. Secondly, whether the monotonic constraints are needed is completely application specific. We call the model with or without the monotonic constraints the ``monotone model'' and ``standard model'' respectively. There could also be a ``mixed'' model where we impose monotonic constraints only on a subset of the features where there are reasons to believe the corresponding hazard only increase over time.
From the modeling point of view, when we impose the ``monotone'' constraints, it trivializes the total variation penalty as $\TV(w_j) =  w_j(T)-w_j(0)$; as a result, in cases where monotonicity of hazard makes sense, we often take $\gamma = 0$ which makes the model attractively parameter-free. 

\paragraph{Non-convex variant of total variation.}
While the total variation penalty is convex and it often induces a
sparse number of change points, it also results in substantial
estimation bias due to the shrinkage of the magnitude of the
changes. The issue is exacerbated when we use a monotonicity
constraint. Since $\TV(w_j) = w_j(T)-w_j(0)$, the total variation
remains the same whether there is only one change point or there are
infinite number of change points. This is a nice feature as the class
of functions contains functions that are smoothly changing, but on the
other hand, it does not always give us a sparse number of change
points that are useful for interpreting the results. This motivates us
to consider a non-convex variant of the total variation penalty of the
following form:
\begin{equation}\label{eq:log-penalty}
\TV_{\log}(f) :=\sup_{\cP\in \left\{  P=\{t_0,...,t_{n_P}\} \middle| P \text{ is a partition of }[a,b]\right\}} \sum_{i=1}^{n_P-1} \log(\epsilon+|f(t_{i+1})-f(t_i)|).
\end{equation}
where $\epsilon$ is a tuning parameter. Taking $\epsilon=1$ will ensure that the functional being non-negative and is $0$ only when $f$ is a constant function. 
We call this variant of the model the ``log'' model or the ``log +
Monotone'' when the monotonic constraints are imposed. The new
functional has a number of desirable properties:
\begin{lemma}
  For any function $f$ we have that $\TV_{\log}(f) \leq
  \TV(f)$. Moreover, if $f$ is Lipschitz continuous it follows that
  $\TV_{\log}(f) = \TV(f)$. 
\end{lemma}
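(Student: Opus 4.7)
The plan is to treat the two claims separately, using pointwise inequalities for $\log(1+x)$ term by term inside the supremum, and then pass the bound through the partition refinement.

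For the first inequality, I would fix an arbitrary partition $P = \{t_0,\dots,t_{n_P}\}$ and use the elementary inequality $\log(1+x) \leq x$ for all $x \geq 0$, applied with $x = |f(t_{i+1})-f(t_i)|$. Summing termwise gives
\begin{equation*}
\sum_{i=1}^{n_P-1} \log(1+|f(t_{i+1})-f(t_i)|) \;\leq\; \sum_{i=1}^{n_P-1} |f(t_{i+1})-f(t_i)| \;\leq\; \TV(f).
\end{equation*}
Taking the supremum over $P$ on the left yields $\TV_{\log}(f) \leq \TV(f)$, valid for every $f$.

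For the second (Lipschitz) inequality, I would reverse the direction using the sharper bound $\log(1+x) \geq x - x^2/2$ for all $x\geq 0$ (easily verified by comparing derivatives, since $\tfrac{1}{1+x} \geq 1-x$ is equivalent to $1 \geq 1-x^2$). The Lipschitz hypothesis with constant $L$ on a bounded domain $[a,b]$ gives both finiteness of $\TV(f) \leq L(b-a)$ and a uniform modulus $|f(t_{i+1})-f(t_i)| \leq L\,\|P\|$, where $\|P\|$ is the mesh. Given $\eta>0$, I would pick a partition $P_0$ with $\sum_i |f(t_{i+1})-f(t_i)| \geq \TV(f) - \eta$, and then refine it to a partition $P$ with mesh $\|P\| < \delta$; since refinement can only increase the usual variation sum (triangle inequality), this lower bound is preserved. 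Applying the pointwise inequality termwise,
\begin{equation*}
\sum_{i} \log(1+|f(t_{i+1})-f(t_i)|) \;\geq\; \sum_i |f(t_{i+1})-f(t_i)| \;-\; \tfrac{1}{2}\sum_i |f(t_{i+1})-f(t_i)|^2,
\end{equation*}
and the quadratic error is controlled by factoring out the maximum gap:
\begin{equation*}
\sum_i |f(t_{i+1})-f(t_i)|^2 \;\leq\; \bigl(\max_i |f(t_{i+1})-f(t_i)|\bigr)\sum_i |f(t_{i+1})-f(t_i)| \;\leq\; L\delta\cdot \TV(f).
\end{equation*}
Hence $\TV_{\log}(f) \geq \TV(f) - \eta - \tfrac{1}{2}L\delta\,\TV(f)$; letting first $\delta\to 0$ and then $\eta\to 0$ gives $\TV_{\log}(f) \geq \TV(f)$, and combining with the first part yields equality.

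The main obstacle to be careful about is the quadratic remainder: it must be shown to vanish uniformly over the choice of partition, which is exactly where the Lipschitz assumption is used (to make the mesh control the individual increments and thus the sum of squares via Hölder-type domination). Without Lipschitz continuity, one could construct functions of bounded variation whose jumps do not shrink under refinement, and the $-x^2/2$ term would bite; so this step is the substantive one, whereas the upper bound $\TV_{\log} \leq \TV$ is a one-line consequence of $\log(1+x) \leq x$.
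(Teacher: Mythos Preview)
Your proof is correct and follows essentially the same approach as the paper: the upper bound via $\log(1+x)\leq x$ termwise, and the matching lower bound in the Lipschitz case via a quadratic remainder $\log(1+x)\geq x-Cx^2$ whose sum vanishes as the mesh shrinks. The only cosmetic differences are that you use the explicit constant $C=\tfrac12$ and bound $\sum_i|\Delta_i|^2\leq(\max_i|\Delta_i|)\,\TV(f)\leq L\delta\,\TV(f)$, whereas the paper works with a uniform $\epsilon$-partition and bounds the error by $L^2\epsilon\,|b-a|$; both are valid and neither changes the substance of the argument.
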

\begin{proof}
  We use two elementary properties of $\log(1+x)$, which follow from 
  concavity. Firstly, the tangent at $0$ majorizes the function, i.e.\
  $\log (1+x) \leq x$ for all $x >
  -1$. Secondly, for any $x,y \geq 0$ we have that $\log(1 + x + y)
  \leq \log(1 + x) + \log(1+y)$. 

  The first property shows that for any partition of $[a,b]$ the value
  of the supremum in \eq{eq:log-penalty} is smaller than that of its
  counterpart for $\TV$. Hence the supremum is majorized, too. 

  The second property follows from the fact that for $\TV$ we can take
  the limit of infinitesimally small segments without decreasing its
  value. In other words, the limit of an infinite partition has the
  same value as $\TV(f)$. On the other hand, 
  \begin{align}
    \label{eq:bigohlog}
    x + y \leq \log(1 + x + y) + C(x^2 + y^2) \text{ for some } C > 0
  \end{align}
  Hence, any $\epsilon$-partition of the interval $[a,b]$ for a
  Lipschitz continuous function $f$ with Lipschitz constant $L$ will
  yield a value of the $\log(1+x)$ penalty that is at most $L^2
  \epsilon |b-a|$ smaller than $TV(f)$. Hence, for $\epsilon \to 0$
  this converges to $TV(f)$.
\end{proof}
A consequence of this is that $\TV_{\log}$ favors functions that are not
Lipschitz, i.e.\ functions that have a small number of larger jumps,
which is exactly what we want. The following example illustrates this.
\begin{example}
	 Consider thre functions $f,g,h: [0,1]\rightarrow \R$. Let $f= I(t\leq 0.5)$, $g= 0.5I(t\leq 1/3) + 0.5 I(t\leq 2/3)$, and $h= t$.
	 Note that $f$ has one change point, $g$ has two and $h$ has an infinite number of change points.
	 
	 Assuming $\epsilon\leq 1$, then 
	 $$\TV_{\log}(f) = \log(\epsilon + 1)$$
	 and the optimal partition is a single block at $t_0=0$, $t_1=1$. It is clear that since $\epsilon\leq 1$ the more block we partition it into, it can only make it smaller. Turns out that we can also calculate the expression for $g$ and by Jensen's inequality and concavity of $\log$, we can clearly see that
	 $$
	 \TV_{\log}(g) = 2\log(\epsilon + 0.5) \geq \log(\epsilon + 1).
	 $$
	 Now
	 $$
	 \TV_{\log}(h) \geq \sup_{\delta \in (0,1]} \frac{1}{\delta}\log(\epsilon + \delta).
	 $$
	 When $\epsilon=1$, it is not hard to check that the supremum is achieved at $\delta\rightarrow 0$, since 
	 $$\left[\frac{1}{\delta}\log(1 + \delta)\right]' = \frac{1}{\delta^2}\left(\frac{\delta}{1+\delta} - \log(1+\delta)\right)\leq 0.$$
	 	 For other $\epsilon\leq 1$, there exists and optimal $\delta>0$ that maximizes the expression.
	 However, taking $\delta\rightarrow  0$ will provide a valid lower bound for all $0<\epsilon\leq 1$.
	 	 By l'Hospital's rule, we get
	 	 $$
	 	 \TV_{\log}(h) \geq \frac{1}{\epsilon},
	 	 $$
	 	 which is bigger than $\TV_{\log}(g)$ with two change points (and $\TV_{\log}(f)$ with one-change points). We conjecture that it is also bigger than any the $\TV_{\log}$ of any functions with a finite number of change points but the same total variation.
\end{example}
We will experiment with different variants of the model in simulation and real data experiments to illustrate the merits of the various approaches.

\paragraph{Comparison to the Cox model.}
Note that our model (regardless of which variant) is a much richer representation comparing to
Cox's proportional hazard model \citep{Cox1972}. Our method
handles time-varying coefficients and feature vectors while Cox
model is static. Also, the semiparametric nature of Cox model by
construction leaves out the baseline hazard $\lambda_0(t)$ such
that it becomes non-trivial to produce a proper survival
distribution. For example, a common trick is to parametrize the
baseline hazard rate $\lambda_0(t)$ by a either a constant or a log-Weibull
density. Our formulation does not require a parametric
assumption and produces a nonparametric estimate of
it to account for all the effects that are not explained by the
given feature. 

The only remaining issue is that \eqref{eq:variational} is an infinite dimensional functional optimization problem and could be very hard to solve. In the subsequent two sections, we address this problem by reducing the problem to finite dimension and deriving scalable optimization algorithms to solve it.


\subsection{Variational characterization}

The following theorem provides a finite set of simple basis functions
that can always represent at least one of the solutions to
\eqref{eq:variational}.

\begin{theorem}[Representer Theorem]\label{thm:repre}
  Assume all observations are either right censored or interval censored \footnote{This is not restrictive because any uncensored data point can be made interval censored with a tiny interval.}, feature $x_i(t)$ for
  each site is nonnegative, piecewise constant over time with finite number
  of change points. Let $s_{\tau}(t) = 1(t\geq \tau)$ be the
  step function at $\tau$. Then there exists an optimal solution
  $(w_1^*,...,w_d^*)$ of Problem \eqref{eq:variational} (either standard model or monotone model) such that for each
  $j=1,..,d$, $$w_j^*(t) = \sum_{\tau\in \cT} s_{\tau}(t)c_{\tau}^{(j)}$$ 
	for some set $\cT$ that collects all censoring boundaries and time steps where feature $x_j(t)$ changes. The coefficient vector $c^{(j)}\in \R^{|\cT|}$.
\end{theorem}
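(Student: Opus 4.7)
My plan is to take any feasible solution $(w_0^*, w_1^*, \ldots, w_d^*)$ of \eqref{eq:variational} and show that replacing each $w_j^*$ by its conditional-mean piecewise constant approximation on the partition induced by $\cT$ yields another feasible solution whose objective value is no larger. Since any function that is piecewise constant on this partition admits the representation $\sum_{\tau \in \cT} s_\tau(t) c_\tau^{(j)}$ by telescoping step functions, the theorem will then follow.

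First, I would show that the negative log-likelihood depends on each $w_j$ only through the subinterval integrals $I_k^{(j)} := \int_{\tau_k}^{\tau_{k+1}} w_j(t)\, dt$ between consecutive points of $\cT$. By \eqref{eq:cumulative_repre}, each likelihood factor is determined by cumulative integrals $\int_0^{\sigma} \langle x_i(t), w(t) \rangle\, dt$ with $\sigma \in \{t_i, T_i, T\}$. These endpoints are censoring boundaries and hence belong to $\cT$ by hypothesis. Because $x_{i,j}(t)$ is piecewise constant with change points also contained in $\cT$, each cumulative integral decomposes as $\sum_k x_{i,j}^{(k)} I_k^{(j)}$. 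Thus if we replace $w_j$ on each subinterval by the constant $\bar{c}_k^{(j)} := I_k^{(j)}/(\tau_{k+1}-\tau_k)$, every $I_k^{(j)}$ is preserved and so is $\ell$. The resulting $\tilde{w}_j$ is nonnegative since averages of nonnegative functions are nonnegative, and whenever $w_j$ is nondecreasing we have $\bar{c}_k^{(j)} \leq w_j(\tau_{k+1}) \leq \bar{c}_{k+1}^{(j)}$, so monotonicity is also preserved.

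The main technical obstacle is showing $\TV(\tilde{w}_j) \leq \TV(w_j)$. I would handle it via the Jordan decomposition $w_j = w_j^+ - w_j^-$ into nondecreasing components satisfying $\TV(w_j) = \TV(w_j^+) + \TV(w_j^-)$. Conditional averaging is linear, so $\tilde{w}_j = \tilde{w}_j^+ - \tilde{w}_j^-$; and by the monotonicity argument above, the sequences $\{\bar{c}_k^{(j),\pm}\}_k$ are themselves monotone. Therefore $\TV(\tilde{w}_j^\pm)$ telescopes to $\bar{c}_{K-1}^{(j),\pm} - \bar{c}_0^{(j),\pm}$, which is sandwiched between the extremal values of $w_j^\pm$ on $[0,T]$ and so is bounded above by $\TV(w_j^\pm)$. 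The triangle inequality $\TV(\tilde{w}_j) \leq \TV(\tilde{w}_j^+) + \TV(\tilde{w}_j^-)$ then gives $\TV(\tilde{w}_j) \leq \TV(w_j)$. Summing over $j$, the penalty does not increase; combined with the preservation of $\ell$ this establishes that $\tilde{w}$ attains objective value no larger than $w^*$, concluding the proof.
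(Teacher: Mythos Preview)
Your argument is correct, and in fact your $\tilde w_j$ is exactly the same object the paper constructs: the paper reparameterizes via the cumulative $W_j(t)=\int_0^t w_j$, takes the piecewise linear interpolation of $W_j^*$ at the points of $\cT$, and differentiates---which on each subinterval $[\tau_k,\tau_{k+1}]$ gives precisely your conditional mean $I_k^{(j)}/(\tau_{k+1}-\tau_k)$. The difference lies in how the TV contraction $\TV(\tilde w_j)\le\TV(w_j^*)$ is justified. The paper invokes Proposition~7 of Mammen and van~de~Geer (1997) as a black box to obtain a first-order spline matching $W_j^*$ on $\cT$ with no larger $\TV$ of its derivative; you instead give a self-contained argument via the Jordan decomposition $w_j=w_j^+-w_j^-$, monotonicity of averages of monotone functions, and the triangle inequality for $\TV$. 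Your route is more elementary and avoids the external citation; the paper's route, on the other hand, makes the connection to locally adaptive regression splines explicit and immediately suggests what happens for higher-order trend filtering (as the paper discusses in its subsequent remark), where your averaging construction would not directly apply. Both proofs tacitly assume an optimizer exists, but your reduction shows the infimum over all feasible $w$ equals the infimum over the finite-dimensional family of piecewise constant functions, which is arguably a cleaner way to frame the existence issue.
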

The proof, given in the appendix, uses a variational characterization due to \citet{deboor1978practical,mammen1997locally} and a trick that reparameterizes our problem using the cumulative function $W_i(t) = \int_0^{t} w_i(t)dt$. Extra care was taken to handle the non-negativity and monotone constraints. We remark that the above result also applies trivially to the case when $\gamma = 0$ (unpenalized version). 

The direct consequence of Theorem~\ref{thm:repre} is that we can now represent piecewise constant functions by vectors in $\R^{|\cT|}$ and solve \eqref{eq:variational} by solving a tractable finite dimensional fused lasso problem (with a nonnegativity constraint and possibly a monotonic constraint) of the form:
\begin{equation}\label{eq:finite_optim}
\begin{aligned}
&\minimize_{w_0,w_1,...,w_d \in \R_+^{|\cT|}} \;&&\;\sum_{i=1}^n\ell(\{x_i, z_i, \psi_i\}; {w}) + \gamma\sum_{j=0}^d \|D w_j\|_1 \\
&\text{subject to:} \;&& w_j(\ell)\geq 0  \text{ for any } j\in [d],\ell = 1,...,|\cT|.\\
&\;&& \text{(if monotone) } w_j(\ell+1) -w_j(\ell)\geq 0 \text{ for any } j=1,...,d, \ell = 1,...,|\cT|-1. 
\end{aligned}
\end{equation}
where we abuse the notation to denote $w_j$ as the vector of evaluations of function $w_j$ at the sorted time points in $\cT$; and $D\in \R^{(|\cT|-1)\times |\cT|}$ is the discrete forward difference operator. 

Although the above result does not cover the cases when we replace the TV-penalty with the nonconvex the log penalty described in \eqref{eq:log-penalty}.
we can still restrict our attention
to the class of piecewise constant functions, and penalize 
\begin{equation}\label{eq:discrete-log}
\tilde{\TV}_{\log}^\epsilon(w_j) := \sum_{\ell=1}^{|\cT|-1} \log \big(\epsilon+|w_j(\ell+1) - w_j(\ell)| \big)
\end{equation}
instead of \eqref{eq:log-penalty}. 
We claim that this is a sensible approximation because this is effectively choosing a specific partition according to the time points that comes from the data, therefore is always a lower bound of \eqref{eq:log-penalty}. In fact, when $\epsilon=1$ and we can check that the number of non-zero summand in \eqref{eq:discrete-log} is exactly the same as the number of jumps in the fitted function $w_j$ and for the same total variation of $w_j$, this penalty would be smaller when the number of change points are larger.

\begin{remark}[Higher order Trend Filtering]
	For $k$-th order trend filtering with $k \geq 1$, we do not get the same variational characterization. Although it is still true that the optimal solution set contains a spline $W_j^*$, there is no guarantee that the knots of $W_j^*$ are necessarily a subset of $\cT$.
	Fortunately, by Proposition~7 of \citet{mammen1997locally}, restricting our attention to the class of splines with knots in $\cT$ will yield a solution that is very close to the $W_j^*$ at every $\tau\in\cT$ and it has total variation of its $k$th derivative on the same order as $\TV(W_j^*)$. In addition, a spline is uniformly approximated by the class of functions that can be represented by the falling factorial basis \citep{Tibshirani2014,wang2014falling}, therefore, the function from $k$th order trend filtering defined on $\cT$ will be a close approximation to the optimal solution of the original variational problem.
\end{remark}
\begin{remark}[A sparse and memory-efficient update scheme]
	The theorem suggests a memory-efficient scheme for optimization, as one can only keep track of the coefficients of the step functions rather than representing the dense vectors $w_0,...,w_d$. Moreover, each stochastic gradient update will be sparse since each user has only a handful of changes in his feature vectors over time and at most two censoring brackets.
\end{remark}
\begin{remark}[Sparsity in the features as implicit regularization]
	On top of that, while the feature vectors are ultra high-dimensional, they are extremely sparse. When $x_j(t) = 0$, the additive structure of the model ensures that changes in $w_j(t)$ do not affect the hazard rate, therefore $w_j(t)$ will be chosen such that the TV penalty is minimized. This suggests that sparsity structures in features also serve as an implicit ``regularization'' which helps to improve the generalization of the model.
\end{remark}

\subsection{Algorithms}
The previous section reduces the optimization over functions to an optimization problem over vectors in finite dimensional Euclidean space. However, it does not imply that the problem is easy to solve. 
Due to the interval-censoring, the loss functions are not convex, and the penalty is either convex ($\TV$) or nonconvex ($\log (|\cdot|+\epsilon)$) but non-smooth. In addition, there are non-negativity and possibly monotonic constraints. In this section, we derive numerical optimization techniques for solving problem \eqref{eq:finite_optim}  and discuss the convergence rate to a stationary point.

Specifically, we propose to train all four variants of our models (standard and monotone / TV or TV-log) using the proximal gradient algorithm with a stochastic variance-reduced gradient approximation \citep{Johnson2013}.
The proposed method is computationally efficient and system-friendly by design, which allows us to scale up the method to work with
hundreds of thousands of data points. While the problem is nonconvex, we find the proposed techniques extremely effective in simulation and real data experiments.

In the subsequent discussions, we will first derive the gradient of the loss function and write down the variance-reduced unbiased estimate of the gradient as per \citep{Johnson2013}. Then we will describe the key steps of the algorithms. 
While the proximal gradient algorithm itself is standard, the main challenge for this specific problem is to be able to solve the proximal map efficiently, since there are several non-smooth terms. We show using a general technique due to \citet{yu2013decomposing} that we can decompose the proximal map into two simpler ones which both admit linear-time algorithms.

\paragraph{Gradient calculations.} Note that the probability of each interval-censored data
$-\log(p(t_i \leq \tau_i < T_i))$ in \eqref{eq:cumulative_repre} can be decomposed
as $\int_0^{t_i} \lambda(t,x_i) dt -\log\Big(1-\exp\big(-\int_{t_i}^{T_i}\lambda(t,x_i)dt\big)\Big)$.
As a result we only need to calculate the integral between $t_i$ and $T_i$ for interval-censored data while evaluating the gradients:
\begin{align*}
\nabla {g_i(w)} =
\begin{cases} 
\frac{1}{1-\exp{\big(\int_{t_i}^{T_i} \lambda(t,x_i)dt\big)}}D\tau^\top&
\text{if }t_i \leq t<T_i.\\
D\tau^\top,& 
\text{otherwise.}\\
\end{cases} \nonumber
\end{align*}
where $g_i(\cdot)$ is the negative log-likelihood function on $x_i$; $\tau \in \RR^{d,\cT}$ denotes sorted times in $\cT$ for each feature $j \in \{1,...,d\}$. 

\paragraph{Proximal-SVRG algorithm.}
The proximal-SVRG algorithm iteratively solves the following with learning rate $\eta$, minibatch size $m$, for $t=0,1,2,3,...$ until convergence:
\begin{enumerate}
	\item If $\text{mod}(t,\lfloor n/m\rfloor) =0$, then for each $j\in[d]$, assign $\tilde{w}_j = w_j$, evaluate the full gradient $\tilde{\mu}_j = \sum_{i=1}^n\nabla g_i(\tilde{w}_j)$.
	Note that the full gradient evaluation is only called every data pass. 
	\item for $j=1,...,d$:
	\begin{enumerate}
		\item Pick a random minibatch $\cS\subset [n]$: 
		$$w^{\textbf{tmp}}_j = w^{(t)}_j - \eta \left(\sum_{i\in \cS}\nabla g_i(w^{(t)}_j) -
		\sum_{i\in\cS}\nabla g_i(\tilde{w}_j) + \tilde{\mu}_j\right).$$
		
		\item Solve the proximal map:
		\begin{small}
		\begin{align}\label{eq:fullprox}
		w^{(t+1)}_j =  \begin{cases}
		\argmin_{w\in\R^{|\cT|}}  \frac{1}{2}\|w - w^{\textbf{tmp}}_j\|^2 + \gamma \|Dw\|_1 +  \delta(w\geq 0), & \text{ for \textbf{standard model}}.\\
		\argmin_{w\in\R^{|\cT|}}  \frac{1}{2}\|w - w^{\textbf{tmp}}_j\|^2 + \gamma \|Dw\|_1 +  \delta(w\geq 0) + \delta(Dw\geq 0),& \text{ for \textbf{monotone model}}.
		\end{cases}
		\end{align}
	\end{small}
		Here $\delta: \R^{|\cT|} \rightarrow \R\cup \{+\infty\}$ is the standard indicator function in convex analysis that evaluates to $0$ when condition is true and $+\infty$ otherwise.
	\end{enumerate}
\end{enumerate}

\begin{proposition}[Decomposable proximal map.]\label{prop:prox_decomp}
	\eqref{eq:fullprox} is equivalent to first solving 
	$$
	w^{\textbf{tmp2}} = \argmin_{w} \|w^{\textbf{tmp}} - w\|_2^2 + \delta(w\geq 0)
	$$
	and then solving
	\begin{equation}\label{eq:prox2-nonmonotone}
	w^{(t+1)}_j =\argmin_w \|w^{\textbf{tmp2}} - w\|_2^2 + \gamma\|Dw\|_1
	\end{equation}
	or
	\begin{equation}\label{eq:prox2-monotone}
		w^{(t+1)}_j =\argmin_{w} \|w^{\textbf{tmp2}} - w\|_2^2 + \gamma\|Dw\|_1 + \delta(Dw \geq 0).
	\end{equation}
	for the standard model and monotone model respectively. 
\end{proposition}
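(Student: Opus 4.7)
The plan is to verify the identity by constructing a KKT certificate at the composed point, which is the logic of the proximal-decomposition framework of \citet{yu2013decomposing} specialised to the case where one summand is the indicator of a convex cone. Throughout I write $g(w)=\delta(w\ge 0)$ and let $f$ be either $\gamma\|Dw\|_1$ (standard model) or $\gamma\|Dw\|_1+\delta(Dw\ge 0)$ (monotone model). In both cases $\mathrm{prox}_g(w^{\textbf{tmp}})=[w^{\textbf{tmp}}]_+$, and the second-stage problem \eqref{eq:prox2-nonmonotone} or \eqref{eq:prox2-monotone} coincides with $\mathrm{prox}_f$. The quadratic term makes every proximal map in the statement a strongly convex minimization with a unique solution, so it is enough to exhibit one KKT-feasible certificate for \eqref{eq:fullprox} at the candidate composed point $u := \mathrm{prox}_f([w^{\textbf{tmp}}]_+)$.

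Setting $v := [w^{\textbf{tmp}}]_+$, the two stages supply the optimality relations $w^{\textbf{tmp}}-v\in\partial g(v)$ and $v-u\in\partial f(u)$, which combine into
\begin{equation*}
w^{\textbf{tmp}} - u \;\in\; \partial f(u) + \bigl(w^{\textbf{tmp}}-v\bigr).
\end{equation*}
The vector $w^{\textbf{tmp}}-v$ is nonpositive and is supported on the set $\{i: w^{\textbf{tmp}}_i<0\}$, which is precisely the form of an element of the normal cone to the nonnegative orthant at $v$. To obtain the KKT condition $w^{\textbf{tmp}}-u\in\partial f(u)+\partial g(u)$ of the joint problem we need the subdifferential transfer $w^{\textbf{tmp}}-v\in\partial g(u)$, together with primal feasibility of $u$. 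Feasibility $u\ge 0$ follows from the classical range-preservation of one-dimensional TV denoising applied to a nonnegative input (and, in the monotone model, from the fact that the added constraint $Dw\ge 0$ cannot pull $u$ below $\min_i v_i = 0$). Because $\partial g$ is the normal cone to the nonnegative orthant, the subdifferential transfer reduces to the zero-preservation statement ``$v_i=0 \Rightarrow u_i=0$''.

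I expect this zero-preservation step to be the main obstacle, since the fused-lasso prox can in principle raise a zero coordinate upward by merging it with a strictly positive neighbour in the taut-string recursion. The plan is to resolve it by inducting on the merging steps of the dynamic-programming / taut-string algorithm used to compute $\mathrm{prox}_{\gamma\|D\cdot\|_1}$ (and the pool-adjacent-violators augmentation that additionally enforces $Du\ge 0$ in the monotone case), showing that at every merge the augmented KKT system continues to hold when the Lagrange multiplier for $g$ is chosen coordinate-wise equal to $\min(w^{\textbf{tmp}}_i,0)$. The induction exploits two features of the clipped input: $v\ge 0$ everywhere, and whenever $v_i>w^{\textbf{tmp}}_i$ the slack is precisely $-\min(w^{\textbf{tmp}}_i,0)\ge 0$ and can be charged to $\partial g(u)$ rather than to $\partial f(u)$. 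Once zero-preservation is established, the three pieces of the KKT system align and uniqueness of the proximal minimiser yields $u=\mathrm{prox}_{f+g}(w^{\textbf{tmp}})$, proving the decomposition in both the standard and monotone cases.
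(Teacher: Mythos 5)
Your overall strategy --- unrolling the proof of Theorem~1 of \citet{yu2013decomposing} into an explicit KKT certificate at the composed point --- is the right family of argument, and you have correctly isolated the step on which everything hinges. But that step is not merely ``the main obstacle'': it is false, and no induction over taut-string merges can repair it. The transfer $w^{\textbf{tmp}}-v\in\partial g(u)$ with $g=\delta(\cdot\ge 0)$ forces $u_i=0$ at every coordinate where $w^{\textbf{tmp}}_i<0$, and the fused-lasso prox genuinely violates this. Take $|\cT|=2$, $w^{\textbf{tmp}}=(-1,10)$, $\gamma=2$, with the $\tfrac12\|\cdot\|^2$ normalization of \eqref{eq:fullprox} in both stages. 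Then $v=(0,10)$ and $u=\mathrm{prox}_{\gamma\|D\cdot\|_1}(v)=(2,8)$: the clipped zero is lifted to $2$, your candidate multiplier $\min(w^{\textbf{tmp}}_1,0)=-1$ does not lie in the normal cone $N_{\R_+}(2)=\{0\}$, and indeed $u$ is not the minimizer of \eqref{eq:fullprox}, which is $(1,8)$ (check: $1+1-\gamma=0$, $8-10+\gamma=0$, both coordinates strictly positive and strictly increasing, so the same point also defeats the monotone case). So the composition in the order ``project, then TV-denoise'' is simply not equal to the joint prox, and the plan of charging the slack to $\partial g(u)$ cannot go through at any coordinate where the TV prox lifts a clipped zero.

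The paper's proof assigns the roles the other way around: it takes $f(x)=\delta(x\ge 0)$ and $g(x)=\gamma\|Dx\|_1$ (plus $\delta(Dx\ge 0)$ for the monotone model) and verifies Yu's condition $\partial g(\mathrm{Prox}_f(x))\supseteq\partial g(x)$, i.e.\ that clipping at zero only \emph{enlarges} the subdifferential of the total-variation term. This is easy and true: $t\mapsto\max(t,0)$ is monotone, so each consecutive difference either keeps its strict sign or collapses to zero, and the normal-cone component in the monotone case is handled by the same coordinatewise case analysis. Yu's theorem then yields $\mathrm{Prox}_{f+g}=\mathrm{Prox}_f\circ\mathrm{Prox}_g$, i.e.\ the TV (or TV-plus-isotonic) prox applied \emph{first} and the orthant projection applied \emph{last} --- the reverse of the order in which the proposition lists its two subproblems, and, as the counterexample above shows, the two orders are not interchangeable for this pair of functions. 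If you want to salvage a direct-KKT argument, redo your certificate with $v=\mathrm{prox}_{\gamma\|D\cdot\|_1}(w^{\textbf{tmp}})$ and $u=[v]_+$; the transfer you then need is that orthant projection does not shrink $\partial\bigl(\gamma\|D\cdot\|_1\bigr)$, which is exactly the sign-preservation fact above and requires no zero-preservation property of the TV prox.
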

	The proof makes use of Theorem 1 of \citet{yu2013decomposing}, and basic definition and properties of subdifferential in convex analysis (see, e.g., \citep{rockafellar2015convex}). The full proof is presented in the appendix.

%

The above result ensures that we can solve the proximal map in two steps.
The first step update is simply a projection to the first orthant, which involves trivially projecting each coordinate separately to $\R_+$. The second step can also be solved in linear time by a dynamic programming algorithm~\cite{johnson2013dynamic}.

\paragraph{Using the same algorithm for the non-convex penalty.} We now show how to use the same algorithm for the log-model. The idea is that we decompose the non-convex penalty as follows
$$
{\tilde{\TV}}_{\log}^\epsilon(w)  = \frac{\|Dw\|_1}{\epsilon} + \xi(w).
$$
The following lemma describes the convenient property of $\xi(w)$ that allows us to group it into the smooth loss function.
\begin{lemma}\label{lem:tv-log2tv}
$\xi(w)$ is continuously differentiable. Also, the gradient of $\xi$ is 
	$$
	D^T \diag\left(\frac{1}{\epsilon + |Dw|}-\frac{1}{\epsilon}\right)\sign(Dw).
	$$
\end{lemma}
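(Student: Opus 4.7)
The statement decomposes into two parts: (i) establishing that $\xi$ is $C^1$, and (ii) computing its gradient. Both can be handled by working componentwise, since $\xi(w) = \sum_{\ell=1}^{|\cT|-1} h((Dw)_\ell)$ with the single-variable primitive
\begin{equation*}
h(u) := \log(\epsilon + |u|) - \frac{|u|}{\epsilon}.
\end{equation*}
The plan is to reduce everything to a regularity check on $h$ and then apply the chain rule.

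First I would verify that $h:\R\to\R$ is continuously differentiable. Away from $0$, $h$ is smooth: for $u>0$, $h(u)=\log(\epsilon+u)-u/\epsilon$ and so $h'(u)=\frac{1}{\epsilon+u}-\frac{1}{\epsilon}$; for $u<0$, $h(u)=\log(\epsilon-u)+u/\epsilon$ and $h'(u)=-\frac{1}{\epsilon-u}+\frac{1}{\epsilon}$. The delicate point, and the only real obstacle, is at $u=0$: both the $\log(\epsilon+|u|)$ term and the $|u|/\epsilon$ term individually have a kink there, and one must observe that these kinks cancel. Concretely, the right and left derivatives at $0$ are
\begin{equation*}
\lim_{u\to 0^+}\Big(\tfrac{1}{\epsilon+u}-\tfrac{1}{\epsilon}\Big)=0,\qquad \lim_{u\to 0^-}\Big(-\tfrac{1}{\epsilon-u}+\tfrac{1}{\epsilon}\Big)=0,
\end{equation*}
so $h'(0)$ exists and equals $0$, and both one-sided expressions are continuous into $0$. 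Thus $h\in C^1(\R)$, and the two cases can be combined into the unified formula
\begin{equation*}
h'(u)=\sign(u)\Big(\tfrac{1}{\epsilon+|u|}-\tfrac{1}{\epsilon}\Big),
\end{equation*}
with the convention $\sign(0)=0$ (any convention works since the factor in parentheses vanishes at $u=0$).

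Next I would lift this to the multivariate statement. Since $\xi=\phi\circ D$ where $\phi(u):=\sum_\ell h(u_\ell)$ is a separable sum of $C^1$ functions, $\phi$ is $C^1$ with $\nabla\phi(u)=\big(h'(u_1),\ldots,h'(u_{|\cT|-1})\big)^\top$. The chain rule then gives
\begin{equation*}
\nabla\xi(w)=D^\top\nabla\phi(Dw)=D^\top\mathrm{diag}\!\Big(\tfrac{1}{\epsilon+|Dw|}-\tfrac{1}{\epsilon}\Big)\sign(Dw),
\end{equation*}
where $|\cdot|$ and $\sign(\cdot)$ act elementwise. Continuity of $\nabla\xi$ follows from continuity of $h'$ and linearity of $D$. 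The main obstacle, as noted, is purely the cancellation of nonsmoothness at the origin; once that is verified, the rest is a direct application of the chain rule.
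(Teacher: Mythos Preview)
Your proposal is correct and, in fact, cleaner than the paper's argument. You exploit the separable structure $\xi=\phi\circ D$ with $\phi(u)=\sum_\ell h(u_\ell)$ and reduce everything to the single-variable check that $h(u)=\log(\epsilon+|u|)-|u|/\epsilon$ is $C^1$; once the kink cancellation at $u=0$ is verified, the chain rule finishes the job in one line. The paper, by contrast, works directly at the multivariate level: it expands $\xi(w+h)$ via Taylor's theorem applied to $\log(\epsilon+\cdot)$, splits the coordinates of $Dw$ into the set $\cS=\{i:[Dw]_i=0\}$ and its complement, and checks the definition of Fr\'echet differentiability by hand, showing that on $\cS$ the nonsmooth contributions from $\log(\epsilon+|\cdot|)$ and $|\cdot|/\epsilon$ cancel exactly while on $\cS^c$ each piece is already smooth. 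Both routes hinge on the same cancellation you identified; your reduction to one variable simply packages it more economically and makes the $C^1$ conclusion immediate, whereas the paper's direct verification is longer but perhaps more self-contained for a reader unfamiliar with the ``limit of derivatives implies differentiability'' fact you implicitly invoke at $u=0$.
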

Note that $\xi(w)$ always exists so the only thing we need to prove is that $\xi(w)$ is continuously differentiable.
The proof uses the definition of multivariate differentiability by checking that at all point $w$, all directional derivatives exists. It is mostly elementary calculus but is somewhat long ans technical, so we defer the detailed arguments to the appendix.

The lemma implies that we can rewrite the objective function of the log-model into:
\begin{equation}\label{eq:objective}
\sum_{i=1}^d\ell(\{x_i,z_i,\psi_i\}) + \gamma \sum_{j=1}^d\xi(w_j) + \frac{\gamma}{\epsilon} \sum_{j=1}^d\|Dw_j\|_1,
\end{equation}
which is of the same form as the TV-model, except for an additional smooth term $\gamma \sum_{j=1}^d\xi(w_j)$. In other word, the same prox-TV algorithm with a slightly modified gradient can be used to find a stationary point for the prox-TV.

\paragraph{Convergence rate.}
Lastly, \citet{reddi2016fast} proved fast convergence rate of proximal SVRG to a stationary point for nonconvex loss functions, which guarantees that with an appropriately chosen learning rate, only $O(1/\varepsilon)$ proximal operators calls and $O(n + n^{2/3}/\varepsilon)$ incremental gradient computation are needed to get to $\varepsilon$ accuracy. 

In our experiments, we find that using diagonal preconditioning with Adagrad \citep{duchi2011adaptive} improves the convergence of both algorithms with little added system overhead; also we find that it is helpful to run a few data passes of prox-SGD before starting prox-SVRG and updating the full gradient only a few data passes (rather than every data pass). Understanding the convergence properties with these additional heuristics included is beyond the scope of this paper.


\section{Experiments}

In this section, we show the accuracy of
the learned latent hazard function evaluated on synthetic data with ground
truth and by conducting case studies on real data evaluated using domain expertise.
We evaluate the out-of-sample predictive power measured
by log-likelihood, which significantly outperforms Cox's models.
The hope is that the hidden hazard function of each attack over
time could reveal the underlying reason why and how some of
the websites were hacked. Finally, we show the generality of the model by applying it to other applications. 

The experimental code is publicly available at
https://github.com/ziqilau/Experimental-HazardRegression.

\begin{figure}[th]
	\centering
	\includegraphics[width=0.45\textwidth,height=0.45\textwidth]{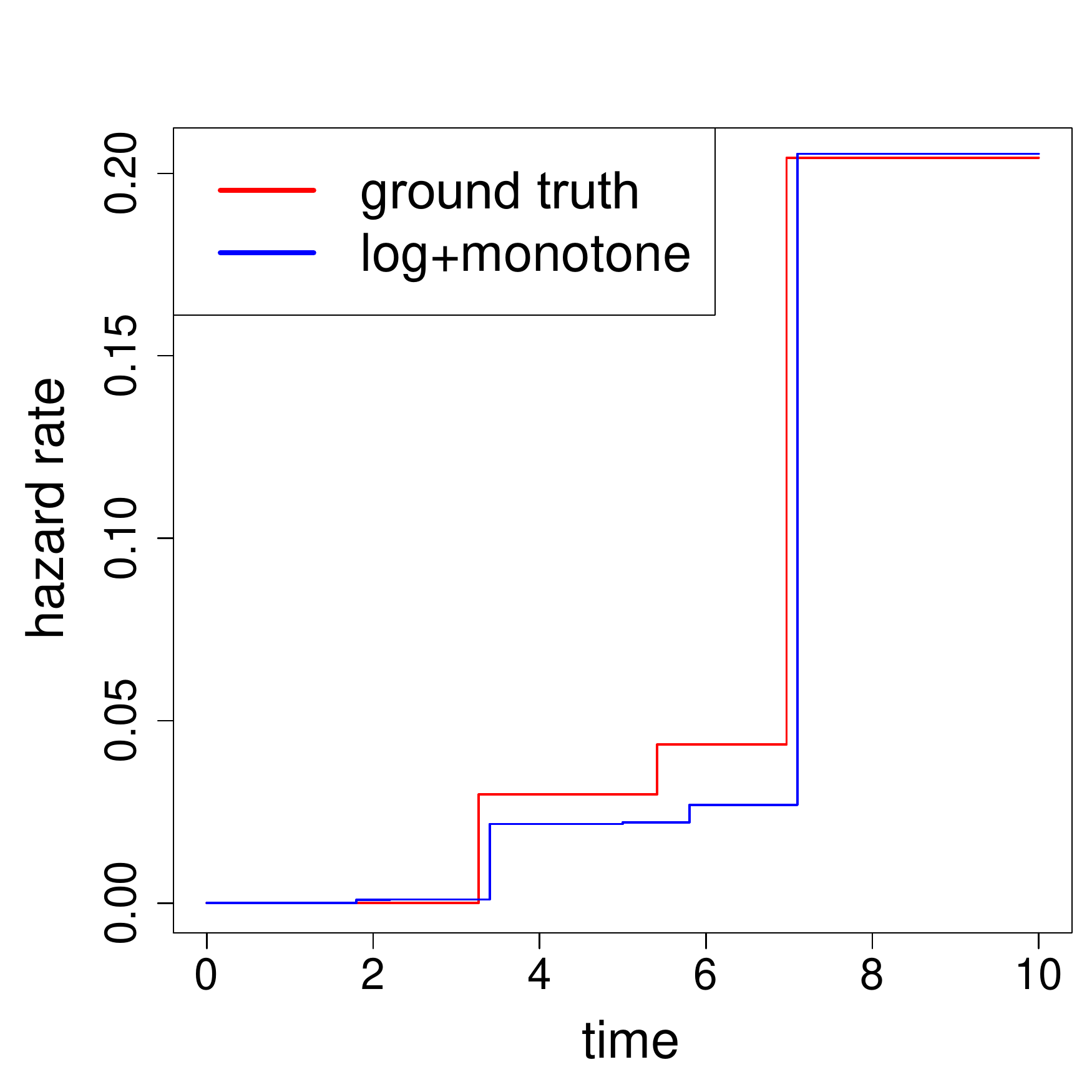}
        \includegraphics[width=0.45\textwidth,height=0.45\textwidth]{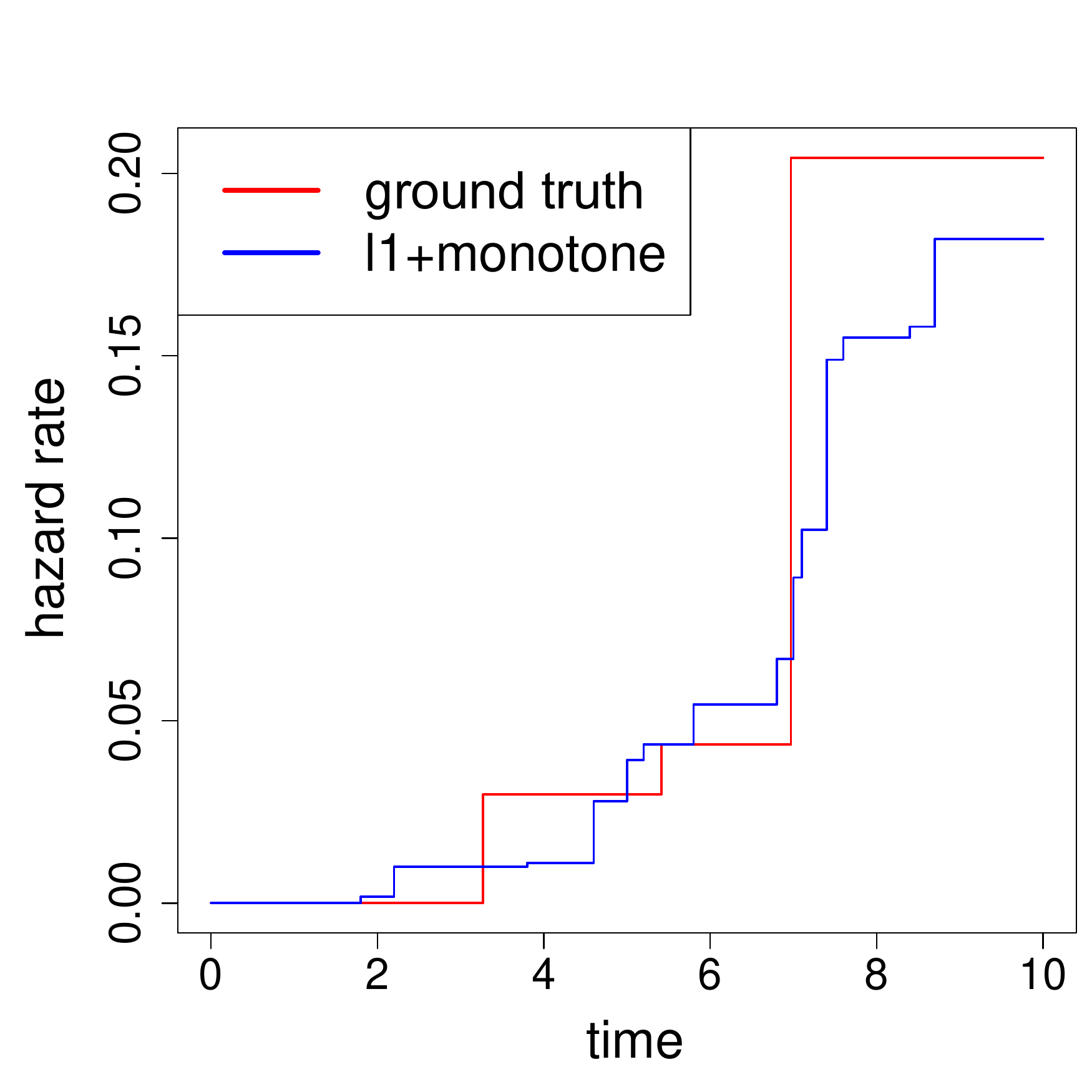}
        \includegraphics[width=0.45\textwidth,height=0.45\textwidth]{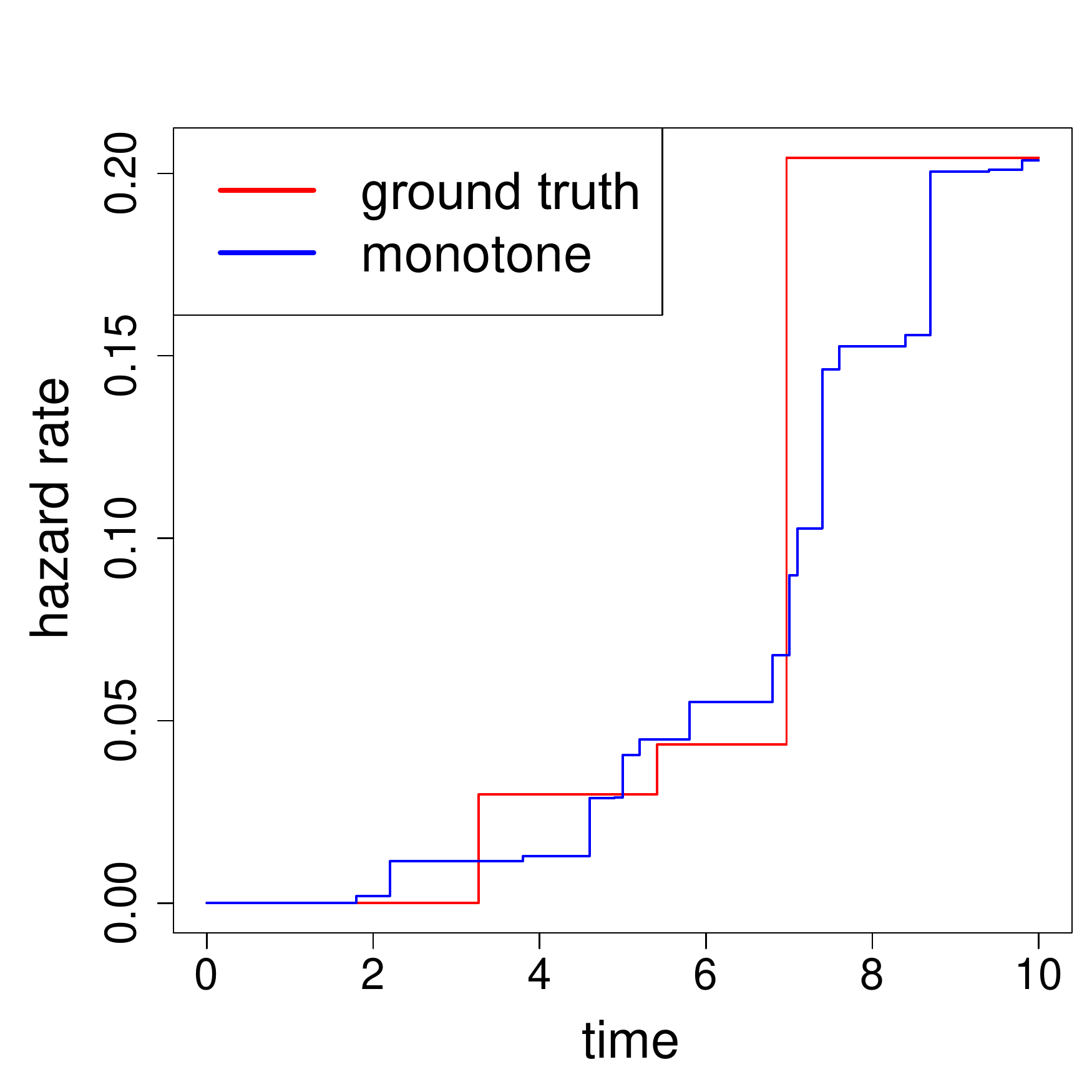}
        \includegraphics[width=0.45\textwidth,height=0.45\textwidth]{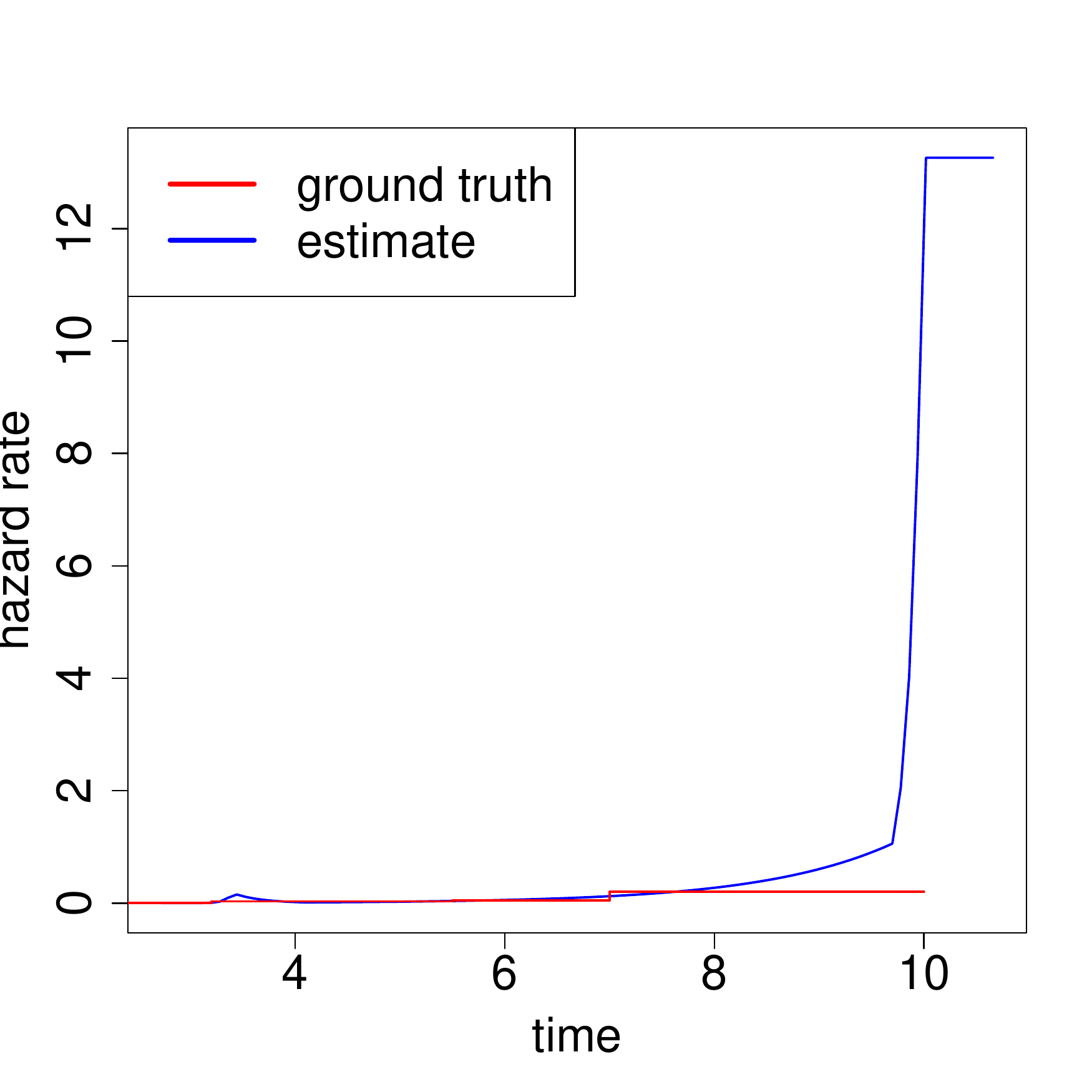}
	\caption{Estimated hazard rate on one exploit: log+monotone({\bf first}), l1+monotone ({\bf second}), monotone ({\bf third} and polspline ({\bf fourth})).}\label{fig:synthetic-case1}
\end{figure}
\begin{figure}[!h]
	\centering
	\includegraphics[width=0.45\textwidth,height=0.45\textwidth]{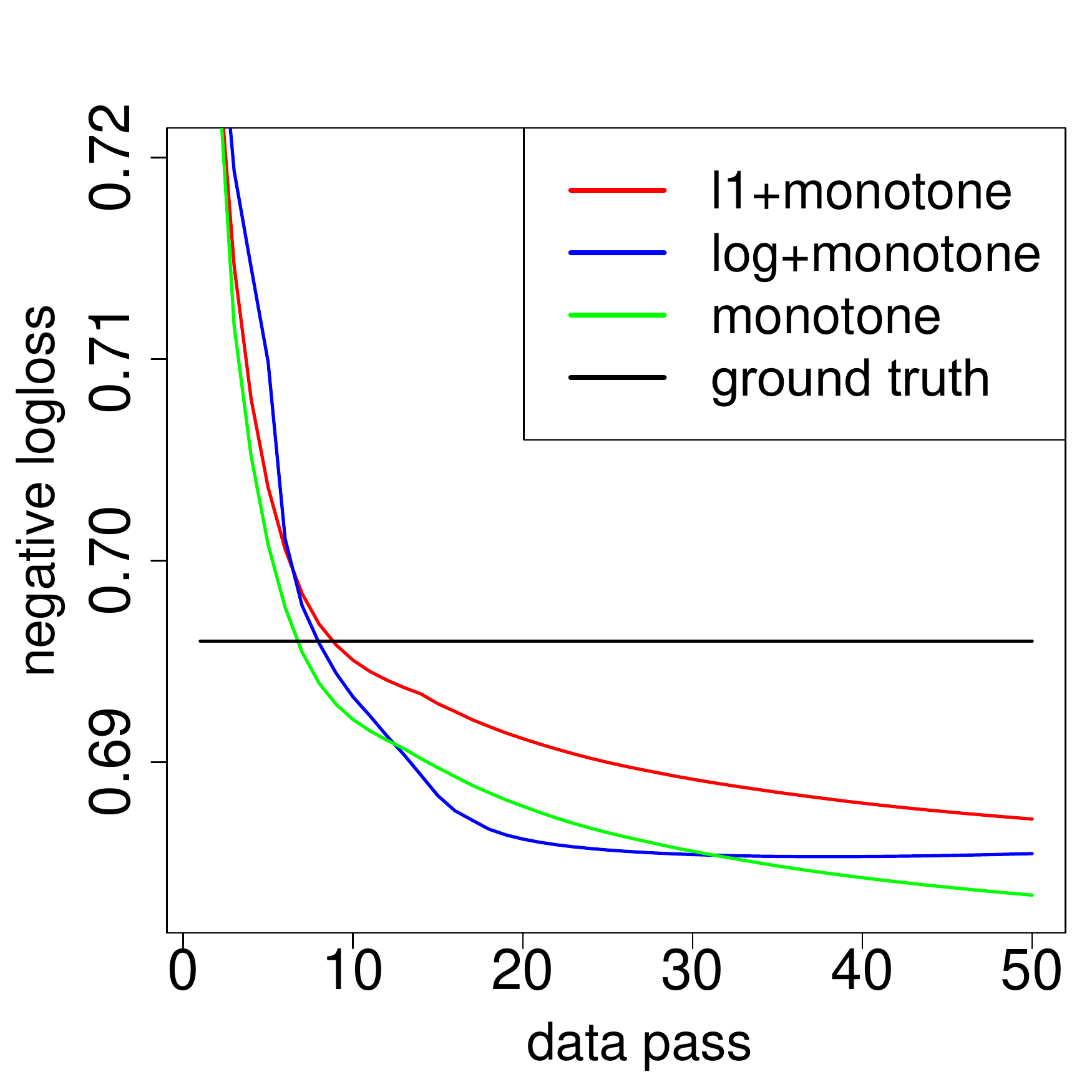}
        \includegraphics[width=0.45\textwidth,height=0.45\textwidth]{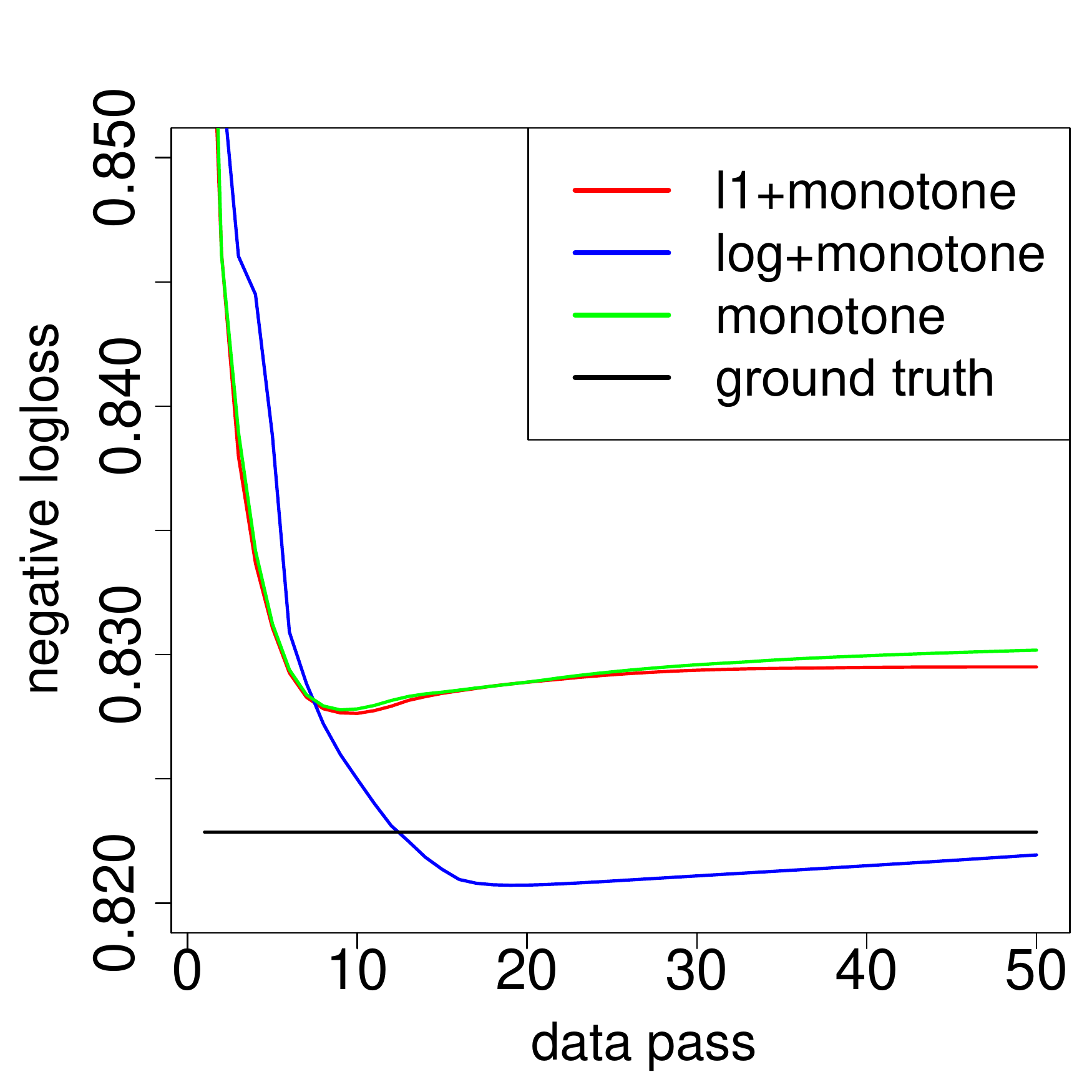}
	\caption{Convergence on training data ({\bf left}) and test data ({\bf right}) respectively. (monotone hazard rate)}\label{fig:synthetic-mono}
\end{figure}
\subsection{Synthetic Experiments}\label{sec:syn}

To demonstrate the effectiveness of the model, we simulate two kinds of attacks. The first type of attack
possess a monotonically increasing hazard rate. This corresponds to
our statistical model with monotone constraint on the hazard rate which is easy to
understand since once a exploit is known it will become
easier and easier for hackers to attack as more tools are available. The second 
type of simulated attack does not have a monotonic hazard rate. 
This leads to our ``non-monotone'' model. It's a practical assumption
because in reality the attack campaigns could be quite complex. 
We will talk about both the
pros and cons of these two schemes in the analysis of the real-world data.

{\bf Data.} In both cases, we simulate the data as follows:
\begin{enumerate}
\item We generate 40 features among which 4 are indicators for the existence 
of a vulnerability that is potentially under attack.
\item To simulate the true attacks, we assume there could be several attack campaigns
for each exploit. For each exploit:
\begin{enumerate} 
\item We randomly pick change points over time, cast as real 
numbers in $[0,10.0]$, each of which corresponds
to the start of one attack campaign. 
\item For each campaign, the hazard rate is randomly sampled.
\end{enumerate}
\item Given the ground true hazard rate we got in step 2, we sample
the exact hacked times for each of 1000 data points.
\item Independently, we assume another
uniformly sampled checking points served as censoring times. Finally we
obtain our experimental interval-censored times by finding the nearest
censoring times around each exact hacked time.
\end{enumerate}

{\bf Comparison methods.} The purpose of this section is to verify the ability of
our models, other time-varying coefficient hazard regression
models and their abilities to recover hazard curves with sharp changes. 
First, most of the existing
state-of-art methods rely on splines where the change points are pre-fixed or automatically
estimated based on model selection. Second, to best of our knowledge, we are not aware of any
existing work allowing both time-varying coefficients and interval censored data.
We study the popular ``polspline''~\cite{Kooperberg1994} which is
a time-varying coefficient hazard regression tool archived in the R
repository\footnote{https://cran.r-project.org/web/views/Survival.html}.
The ``polspline'' automatically learn splines including constant functions,
linear functions of time and so on.
Note that ``polspline'' only works on uncensored data and right censored data.

\begin{figure}[!t]
	\centering
	\includegraphics[width=0.45\textwidth,height=0.45\textwidth]{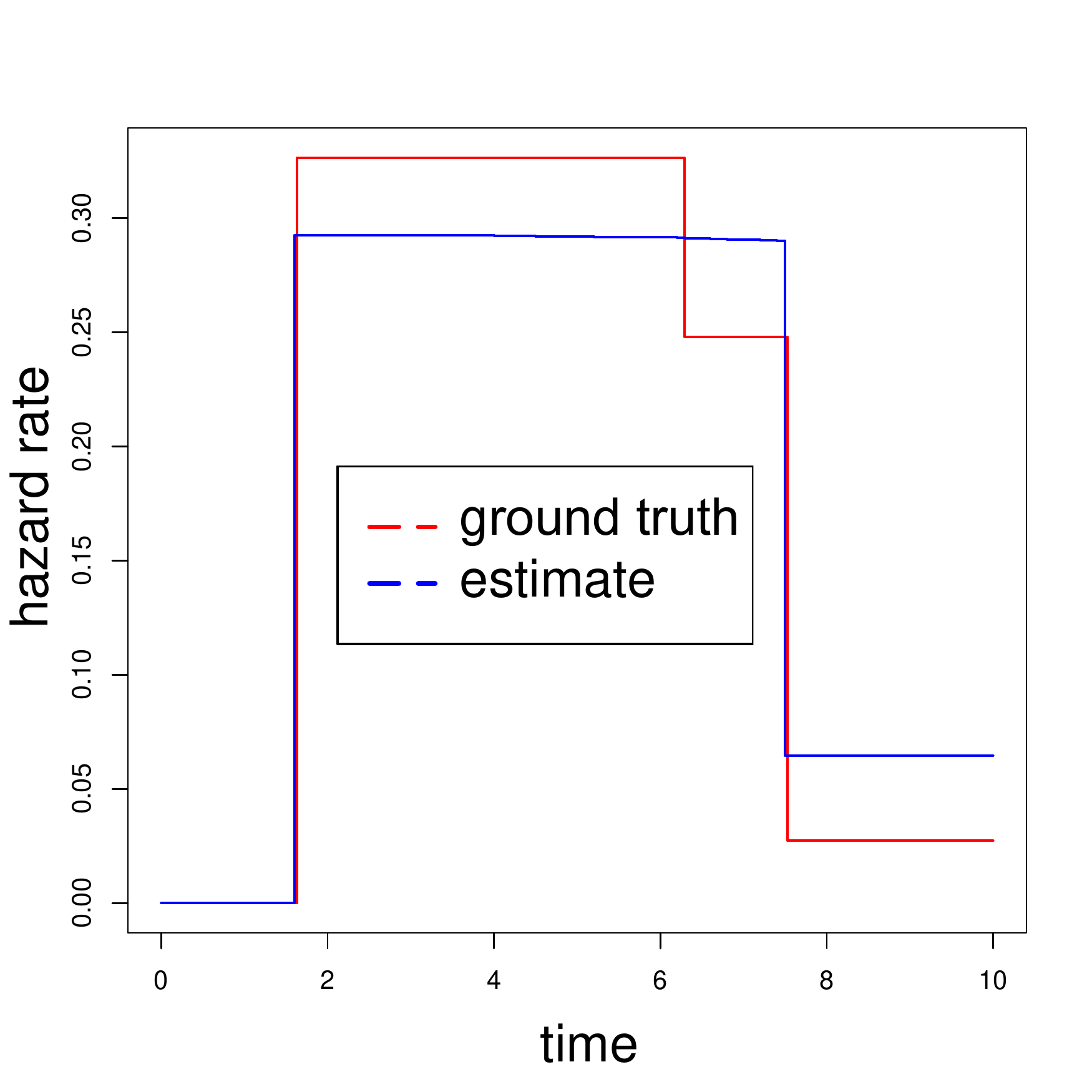}
        \includegraphics[width=0.45\textwidth,height=0.45\textwidth]{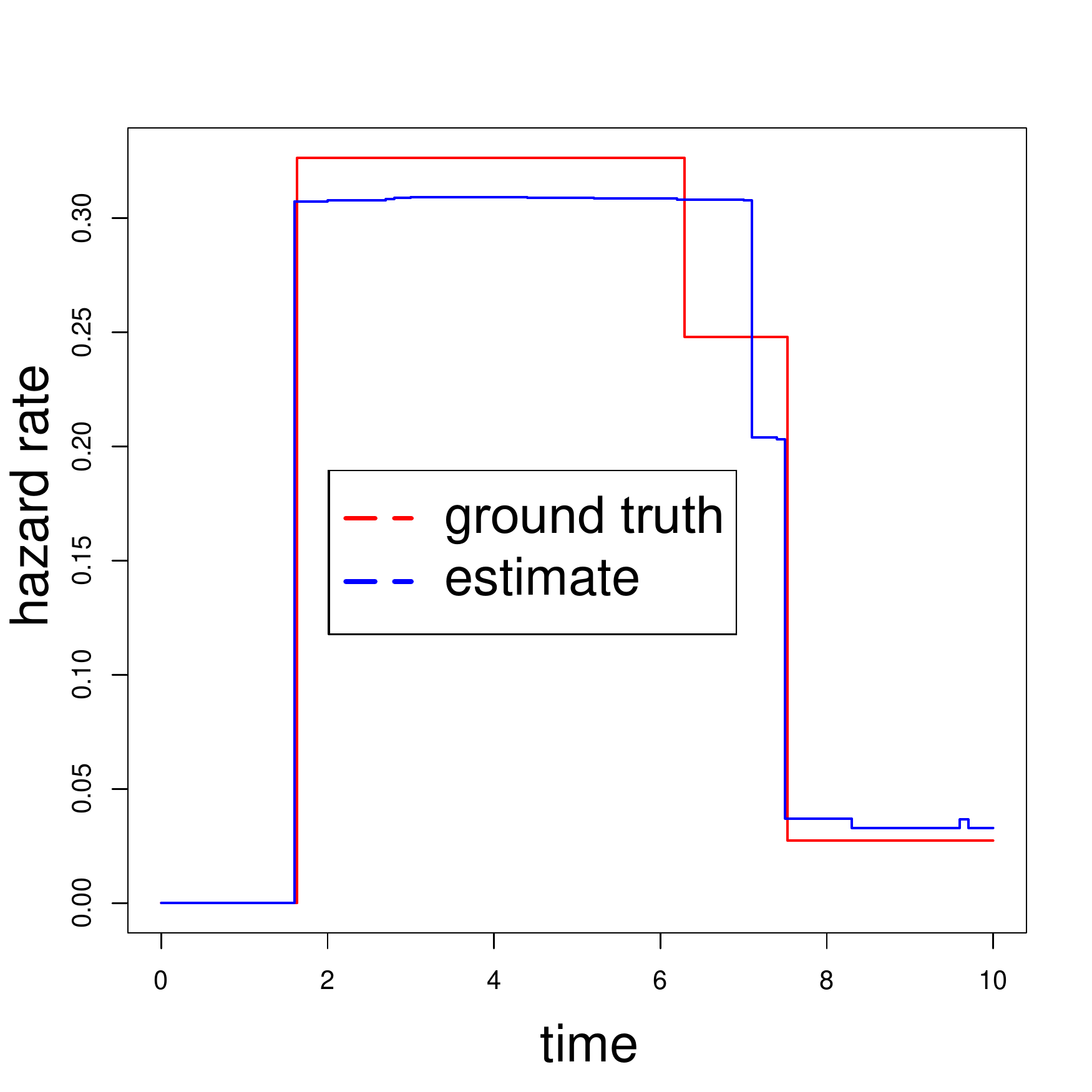}
        \includegraphics[width=0.45\textwidth,height=0.45\textwidth]{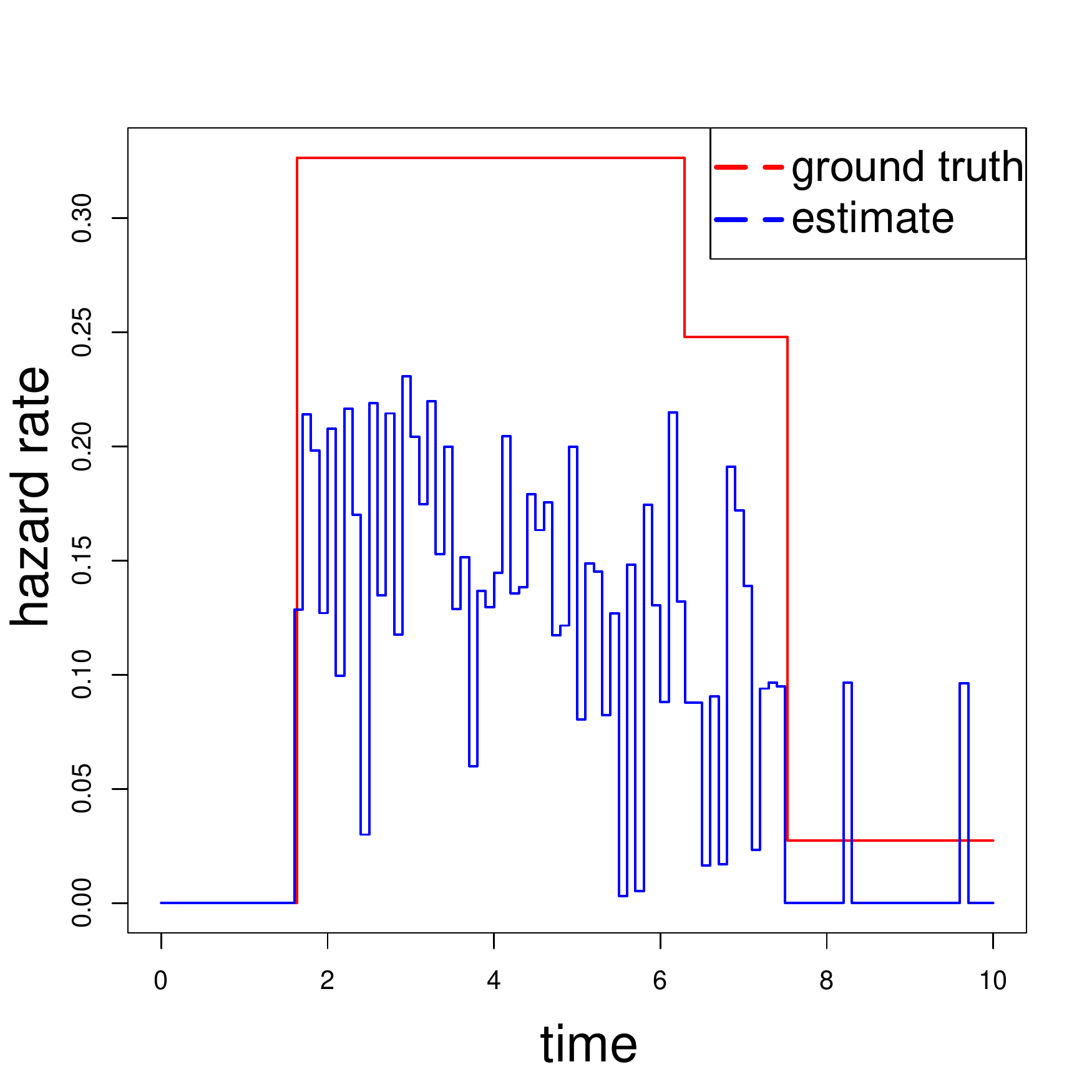}
	\includegraphics[width=0.45\textwidth,height=0.45\textwidth]{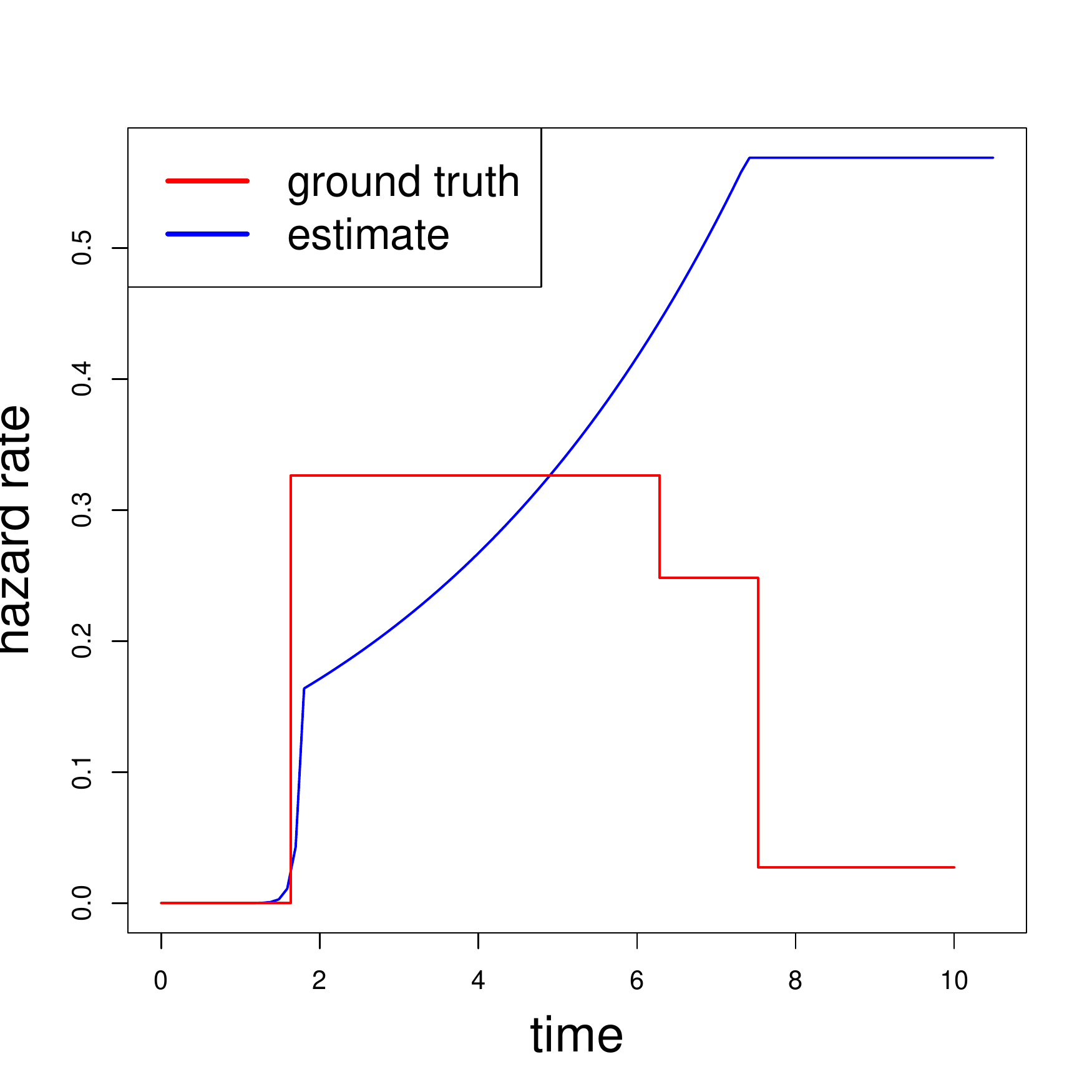}
	\caption{Estimated hazard rate on one exploit: log+l1({\bf first}), l1 ({\bf second}), non-monotone ({\bf third}) and polspline ({\bf fourth}).}\label{fig:synthetic-case2}
\end{figure}
\begin{figure}[!h]
	\centering
	\includegraphics[width=0.45\textwidth,height=0.45\textwidth]{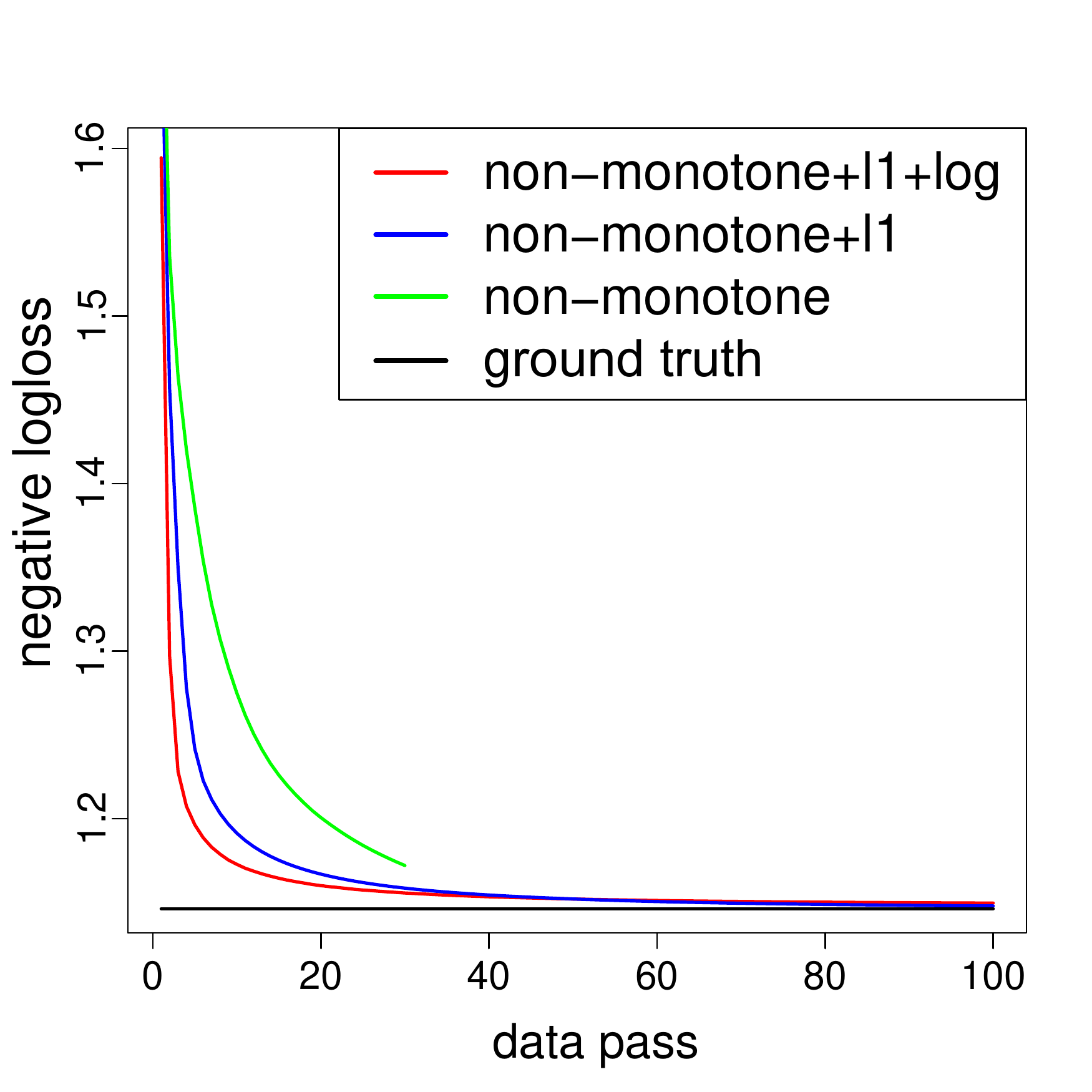}
        \includegraphics[width=0.45\textwidth,height=0.45\textwidth]{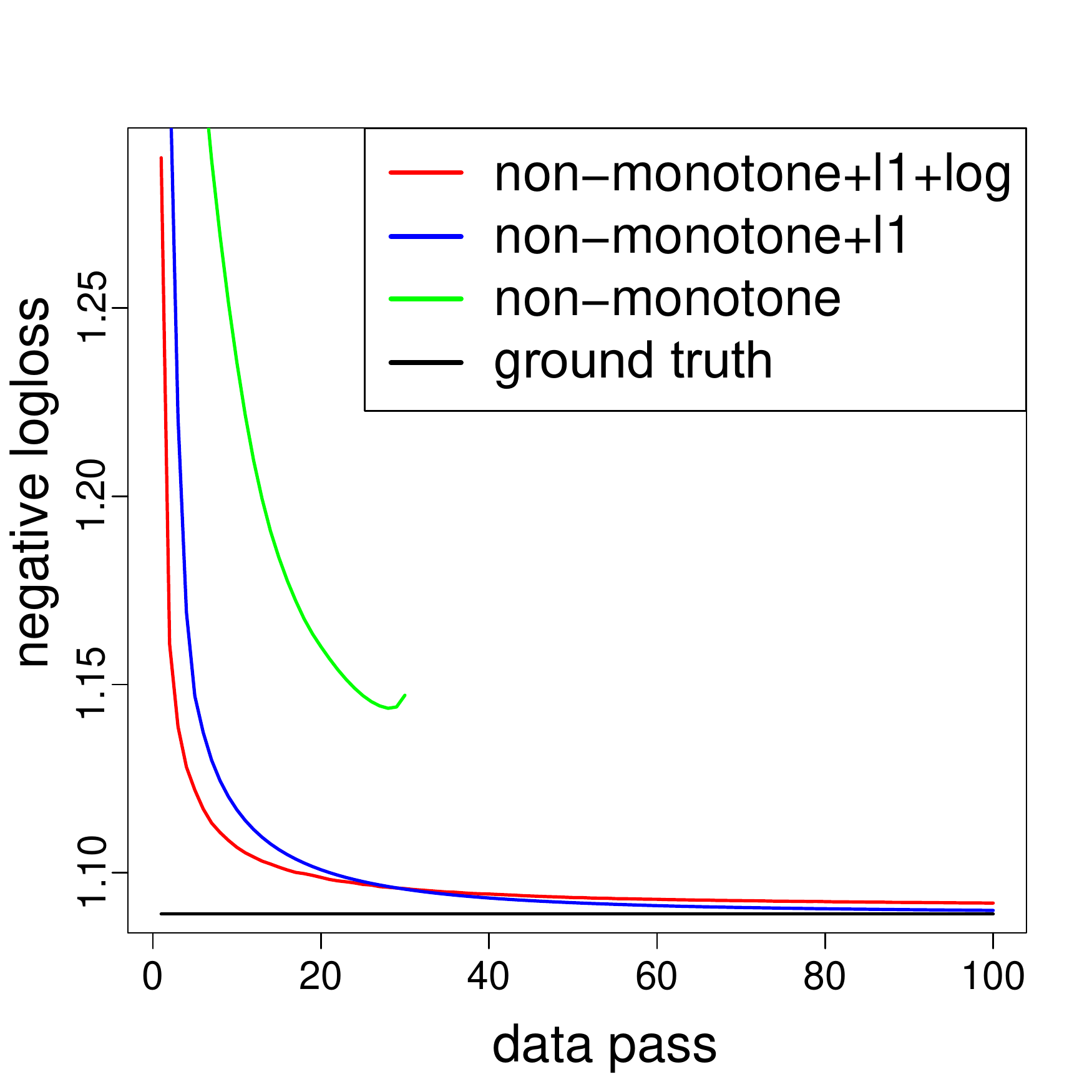}
	\caption{Convergence on training data ({\bf left}) and test data ({\bf right}) respectively. (non-monotone hazard rate)}\label{fig:synthetic-non}
\end{figure}

We denote ``$\ell_1$'' as $\ell_1$ penalized Total Variation, and ``$\log$'' as $\log$ penalty in Eq.\eqref{eq:discrete-log}.
For convenient, we use the term ``non-monotone'' as standard model in Eq.\eqref{eq:fullprox} without Total Variation or log penalty.
Similarly we use the term ``monotone'' as monotone model in Eq.\eqref{eq:fullprox} without Total Variation or log penalty.
In our experiments, we will see the effects of placing ``$\ell_1$'' and ``$\log$'' onto ``monotone'' and ``non-monotone'' models.

The results for monotonic hazard rates are reported in Figure~\ref{fig:synthetic-case1} and \ref{fig:synthetic-mono}.  The convergence in Figure~\ref{fig:synthetic-mono}
shows that compared with ``$\ell_1$'' and monotone, ``$\log$'' penalty works a bit better. The reason for this can be seen from Figure~\ref{fig:synthetic-case1} (1 out of 4
exploit) where
the ``$\log$'' penalty produces a much sharper hazard curve and
approximates the ground truth quite well.

Figures~\ref{fig:synthetic-case2}~and~\ref{fig:synthetic-non} and  show the results on data generated without the monotonic hazard rate constraint. In this case both the
``$\ell_1$'' and ``$\log$'' penalties work well. It is expected that the non-monotone model without any regularizer will overfit the data quite aggressively. The minor
difference between ``$\ell_1$'' and ``$\ell_1+\log$'' is that
``$\ell_1+\log$'' produces sharper curves but tends to ignore weak
signals, e.g.\ the second knot, when the signal-to-noise ratio is
relative small, i.e.\ it prefers significant signals.
The convergence in Figure~\ref{fig:synthetic-non}
shows that both ``$\ell_1$'' and ``$\log$'' penalty significantly
outperform the plain non-monotone model. 

Note that our model consistently assigns all features that are not exploits to zero.

In order to compare with our models, we build the data for ``polspline'' as follows.
First, we start with the interval and right censored data simulated in
the beginning of this section. We leave right censored data unchanged.
Second, we generate two data points (uncensored and right censored) for every
interval censored data point, i.e., the left end point of interval censored
time serves as right censored time, and
the exact hacking time serves as uncensored time.

We show the results of ``polspline'' in Figures~\ref{fig:synthetic-case1}
and \ref{fig:synthetic-case2}.
It can be seen that spline based models which allow relatively
smooth changes struggle to model the sharp hazard rate well enough. It is expected from the shape of the hazard
rate curve that the negative log-likelihood of ``polspline'' compared with our models is
inferior. Due to the algorithm of ``polspline'' is not iterative-like, we are not able to
obtain the results of estimated hazard rate in each iteration, therefore we instead
report the final results after the algorithm is done in Table~\ref{tb:pol}.
\begin{table}[tb]
	    \begin{tabular}{|c|c|c|c|}
	    	\hline
	    	Cases & Ground Truth & Our models & polspline\\\hline
	    	non-monotone & 1.082 & 1.085 & 4.123\\
	    	monotone & 0.823 & 0.823 & 3.512\\\hline
	    \end{tabular}
  \centering
  \caption{\label{tb:pol} Negative log-likelihood of the estimates of ``polspline'' compared with our models on test set, synthetic settings.}
\end{table}

\subsection{Real-World Data}

The data used for evaluation was sourced from the work of \citet{Soska2014} and was comprised as a collection of interval censored sites from blacklists and right censored sites randomly sampled from .com domains\footnote{A .com zone file is the list of all registered .com domains at the time.}. As a consequence of the time-varying distribution of software deployed on the web, all the samples were drawn from The Wayback Machine\footnote{The Wayback Machine is a service that archives parts of the web.} only when archives were available at appropriate dates.

One of the blacklists that was sampled was PhishTank, a blacklist of predominately phishing\footnote{A phishing website is a website that impersonates another site such as a bank, typically to trick users and steal credentials.} websites for which 11,724,276 unique URLs from 91,155 unique sites were observed between February 23, 2013 and December 31, 2013. The Wayback Machine contained usable archives
for {\em 34,922} (38.3\%) domains. The other blacklist that was used contains websites that perform search redirection attacks \cite{Leontiadis2014} and was sampled from October 20, 2011 to September 16, 2013. In total the sample contained 738,479 unique links, from 16,173 unique domains. The Wayback Machine contained archives in the acceptable range for {\em 14,425} (89\%) of these sites.

These two blacklists are particularly well suited for providing labeled samples of attacked websites as manual inspection has shown, an overwhelmingly large proportion of these sites were compromised by a hacker. This is contrary to other websites which may simply have been maliciously hosted or contained controversial content without even being vulnerable or hacked. 

Lastly, the .com zone file from January 14th, 2014 was randomly sampled, ignoring cases where an image of the site was not available in The Wayback Machine. In total {\em 336,671} archives distributed uniformly between February 20th, 2010 and September 31st, 2013 were collected. These samples were checked against our blacklists as well as Google Safe Browsing to ensure that as few compromised sites remained in the sample as possible.

We automatically extracted raw tags and attributes from webpages, that served as features (a total of {\em 159,000} features). Examples of these tags and attributes include <br>, and <meta> WordPress 2.9.2</meta>. They are useful for indicating the presence of code that is vulnerable or may be the target of adversaries. Our corpus of features corresponds to a very large set of distinct and diverse software packages or content management systems.

\subsection{Real-World Numeric Results}\label{sec:numeric}
There are a total of 120 million websites registered in .com zone file at the end
of our observation. According to a rough estimates of the distribution
of hacked and non-hacked sites, 
we reweigh the non-hacked websites by 200 times. To report the
results, we randomly select 80\% for training and validation, and the rest as test data.

{\bf Comparison methods.} The baseline method is the classic Cox Proportional model~\citep{Cox1972} which has been extensively used for
hazard regression and survival analysis. Ever since its invention, has been considered a ``gold standard'' in epidemiology, clinical trials and biomedical studies \citep[see e.g.,][]{woodward2013epidemiology}. The Cox model is parametrized based on the 
features just as our model is, but is not time-varying. 
As has been discussed in section~\ref{sec:stat}, to estimate
the survival probabilities we specify a uniform distribution for 
the baseline hazard function. We are not
able to compare with other time-varying models in R repository
due to the scale of the data.

An experimental comparison between our models and the Cox model on the aforementioned dataset are shown in Figure~\ref{fig:real-convergence}. By comparison, the 
Cox model underfits the data quite a bit. Our ``monotone'' model which allows only a non-decreasing hazard rate also underfits the data slightly but still significantly outperforms Cox. Moreover due to much smaller
parameter space need to search, we find that it converges faster than the ``$\ell_1$+non-monotone'' model. Additionally, our ``$\log$+monotone'' model performs nearly
the same on convergence (overlapped). Again it is well expected that ``non-monotone'' model without any constraint overfits the data severely.  As a consequence, the ``$\ell_1$+non-monotone''
model which is well-regularized performs the best. These results clearly
show that the latent hazard rate recovered by our models improves upon the classic Cox model in this setting.

Due to the sparsity of our models, table~\ref{tb:size} shows that we require only about three times the storage to give significantly better
estimates compared with the Cox model. Most importantly, identifying the changes of each feature's susceptibility over time can help people understand the latent hacking campaigns and leverage these insights to take appropriate action. We will discuss more in section~\ref{sec:case}.
\begin{figure}[tb]
	\centering
	\includegraphics[width=0.45\textwidth,height=0.45\textwidth]{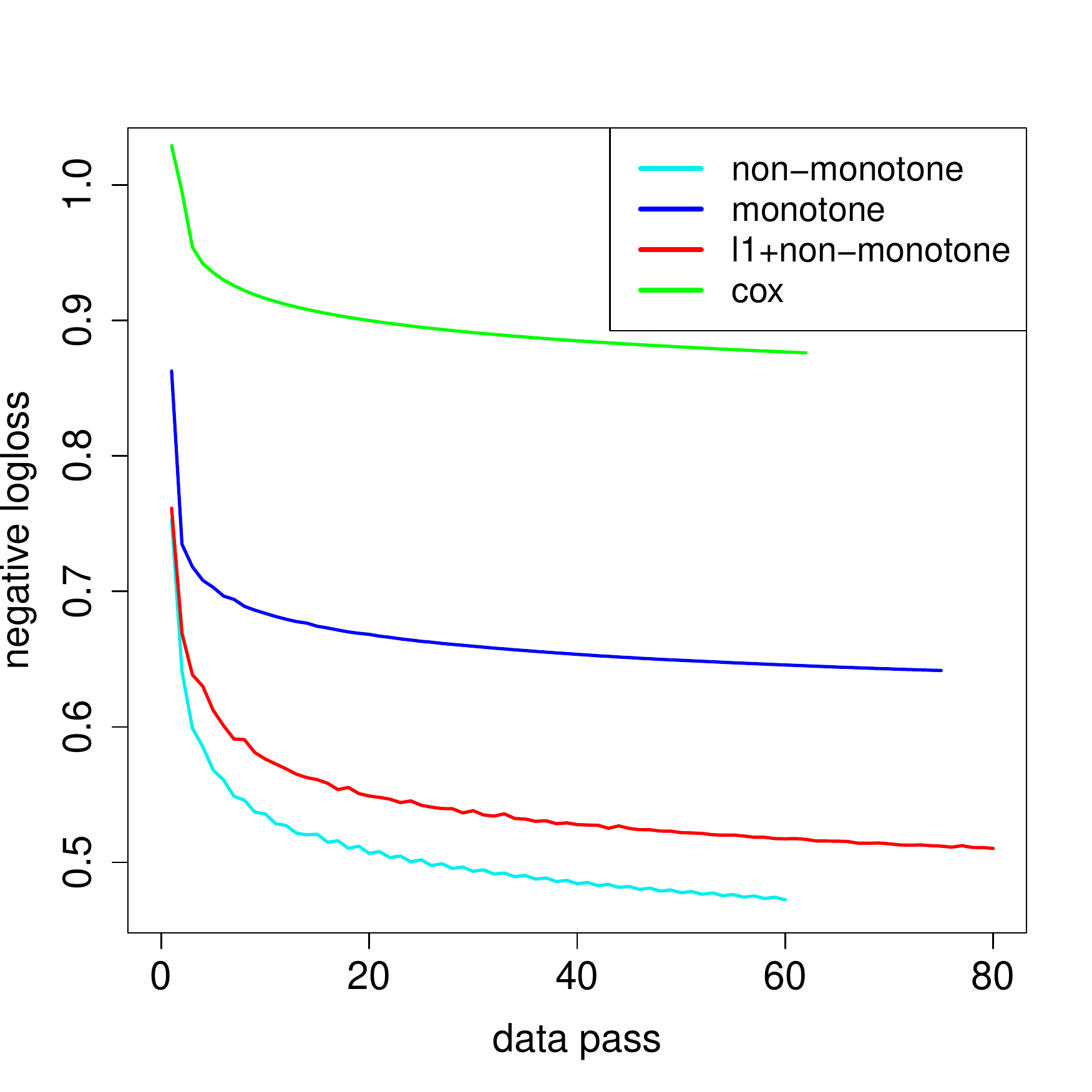}
	\includegraphics[width=0.45\textwidth,height=0.45\textwidth]{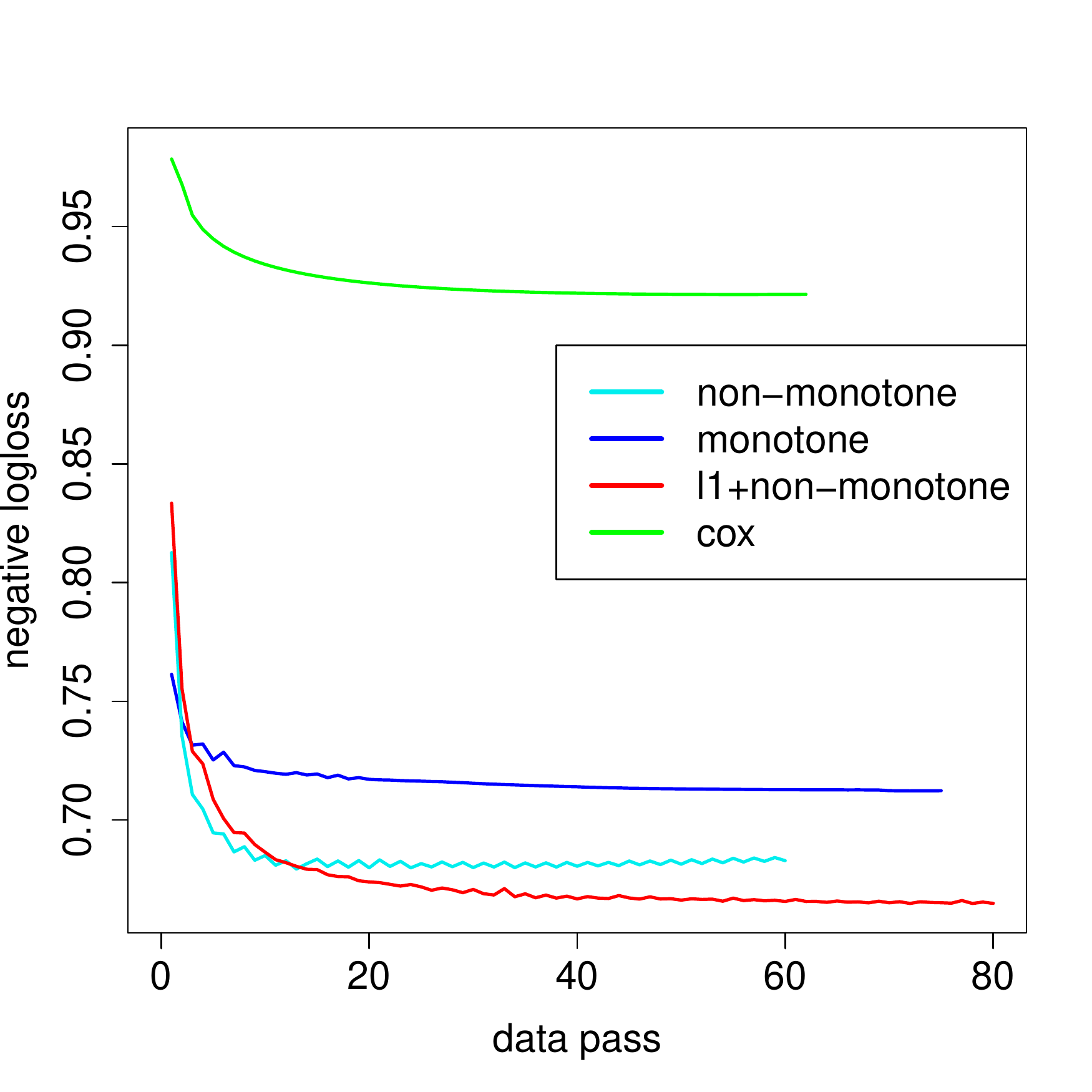}
	\caption{Convergence on training data ({\bf left}) and test data ({\bf right}) respectively.}\label{fig:real-convergence}
\end{figure}
\begin{table}[tb]
	    \begin{tabular}{|c|c|}
	    	\hline
	    	Methods & Empirical model size\\\hline
	    	non-monotone & $2\cdot10^6$ \\
	    	monotone &$4.04\cdot 10^5$\\
	    	 $\ell_1$+nonmonotone & $5.16\cdot 10^5$ \\
	    	 Cox  &$1.59 \cdot 10^5$ \\\hline
	    \end{tabular}
  \centering
  \caption{\label{tb:size} Empirical model size (active breakpoints for our methods, number of parameters for Cox) estimated by different statistic models.}
  \end{table}


Finally it is imperative that the model does not assign non-zero hazard rate
to features that are uncorrelated with the security outcome of a website.
The hazard curve for 200 random features believed to be uncorrelated
with security (such as code for custom font colors, styles, and links
to unique images) were manually studied, 182 (9\% false positive rate)
of which generated a hazard value of 0 for the entire duration of
the experiment. Of the 18 features that were assigned a non-zero
hazard curve, all of them reported a value of less than 0.04 which
can be ignored.

\subsubsection{Real-World Case Study}\label{sec:case}



\begin{figure}[tb]
	\centering
		\includegraphics[width=0.45\textwidth, height=0.45\textwidth]{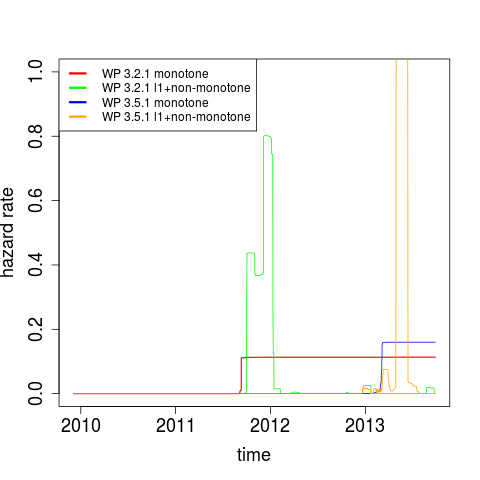}
		\includegraphics[width=0.45\textwidth, height=0.45\textwidth]{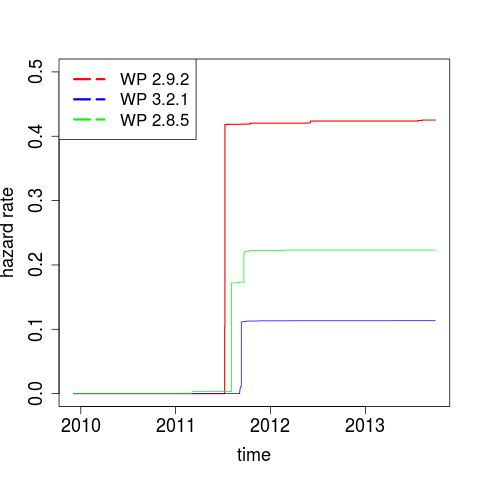}
	\caption{$\lambda_t(i)$ of a feature known to correspond directly to instances of Wordpress 3.2.1 and Wordpress 3.5.1.({\bf left}); $\lambda_t(i)$ of features known to correspond directly to different versions of the Wordpress content management system that were attacked in the summer of 2011.({\bf right})}
	\label{fig:wp321351}
\end{figure}

In this section, we manually inspect the model's ability to automatically discover
known security events. To this end, the model was trained on the aforementioned
dataset and $\lambda_i(t)$  and was measured for features $i$ that corresponded
directly to websites that were known to be the victim of attacks.

Figure~\ref{fig:wp321351} ({\bf left}) demonstrates some of the differences between the well penalized monotone and non-monotone models by following the hazard assigned to features that correspond to Wordpress 3.5.1. In early 2013, our dataset recorded a few malicious instances of Wordpress 3.5.1 sites (among some benign ones). These initial samples appeared to be part of a small scale test or proof of concept by the adversary to demonstrate their ability to exploit the platform. Both models detect these security events and respond by assigning a non-zero hazard.

Following the small scale test was a lack of activity for a few weeks, during which the non-monotone model relaxes its hazard rate back down to zero, just before an attack campaign on a much larger scale is launched. This example illustrates the importance of not letting a guard down in the context of security. Once a vulnerability for a software package is known, that package is always at risk, even if it is not actively being exploited. 

\begin{figure}[!ht]
	\centering
		\includegraphics[width=0.45\textwidth]{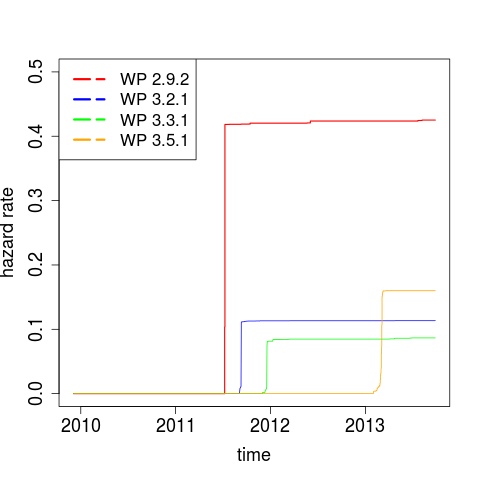}
	\caption{$\lambda_t(i)$ of features known to correspond directly to particular versions of Wordpress.}
	\label{fig:wppaper}
\end{figure}

Despite not taking the most prudent approach to security, the non-monotone model captures the notion that adversaries tend to work in batches or attack campaigns. Previous work~\cite{Soska2014} has shown that it is economically efficient for adversaries to compromise similar sites in large batches, and after a few attack campaigns, most vulnerable websites tend to be ignored. This phenomena is shown in Figure~\ref{fig:wp321351} where Wordpress 3.2.1 was attacked in late 2011 and then subsequently ignored with the exception of a few small attacks that were likely the work of amateurs or password guessing attacks which are orthogonal to the underlying software and any observable content features. The monotone model in this case is very prudent while the non-monotone model captures the notion that the software is not being targeted.

It can be observed from Figure~\ref{fig:wp321351}~({\bf right}) that a number of distinct  Wordpress distributions experienced a change-point in the summer of 2011 (between July 8th 2011 and August 11th 2011). This phenomena was present in several of the most popular versions of Wordpress in the dataset including versions 2.8.5, 2.9.2 and 3.2.1.

This type of correlation between the hazard of features corresponding to different versions of a software package is expected. This correlation often occurs when adversaries exploit vulnerabilities which are present in multiple versions of a package, or plugins and third party add-ons that share compatibility across the different packages.

Manual investigation revealed that a number of impactful CVEs\footnote{CVE stands for Common Vulnerabilities and Exposures, which is a list of publicly disclosed vulnerabilities and security risks to software.} such as remote file inclusion and privilege escalation were found for these versions of Wordpress as well as a particular plugin around the time of July 2011. While it is impossible to attribute with certainty any particular vulnerability, the observed behavior is consistent with vulnerabilities that impact large number of consecutive iterations of software.

Another spot check for the model is the ability to corroborate existing literature on malicious web deteciton. Figure~\ref{fig:wppaper} demonstrates the change-points in $\lambda_i(t)$ for specific versions of Wordpress. The model assigns Wordpress 2.9.2, 3.2.1, 3.3.1 and 3.5.1 change-points around July 2011, August 2011, December 2011, and February 2013 respectively. The work of Soska et al.~\cite{Soska2014} found nearly identical attack campaigns for Wordpress 2.9.2, 3.2.1 and 3.3.1 but failed to produce a meaningful result for 3.5.1.

\subsection{Studies on Dropout rate from Alipay.com}\label{sec:alipay}
\begin{figure}[!ht]
	\centering
		\includegraphics[width=0.45\textwidth]{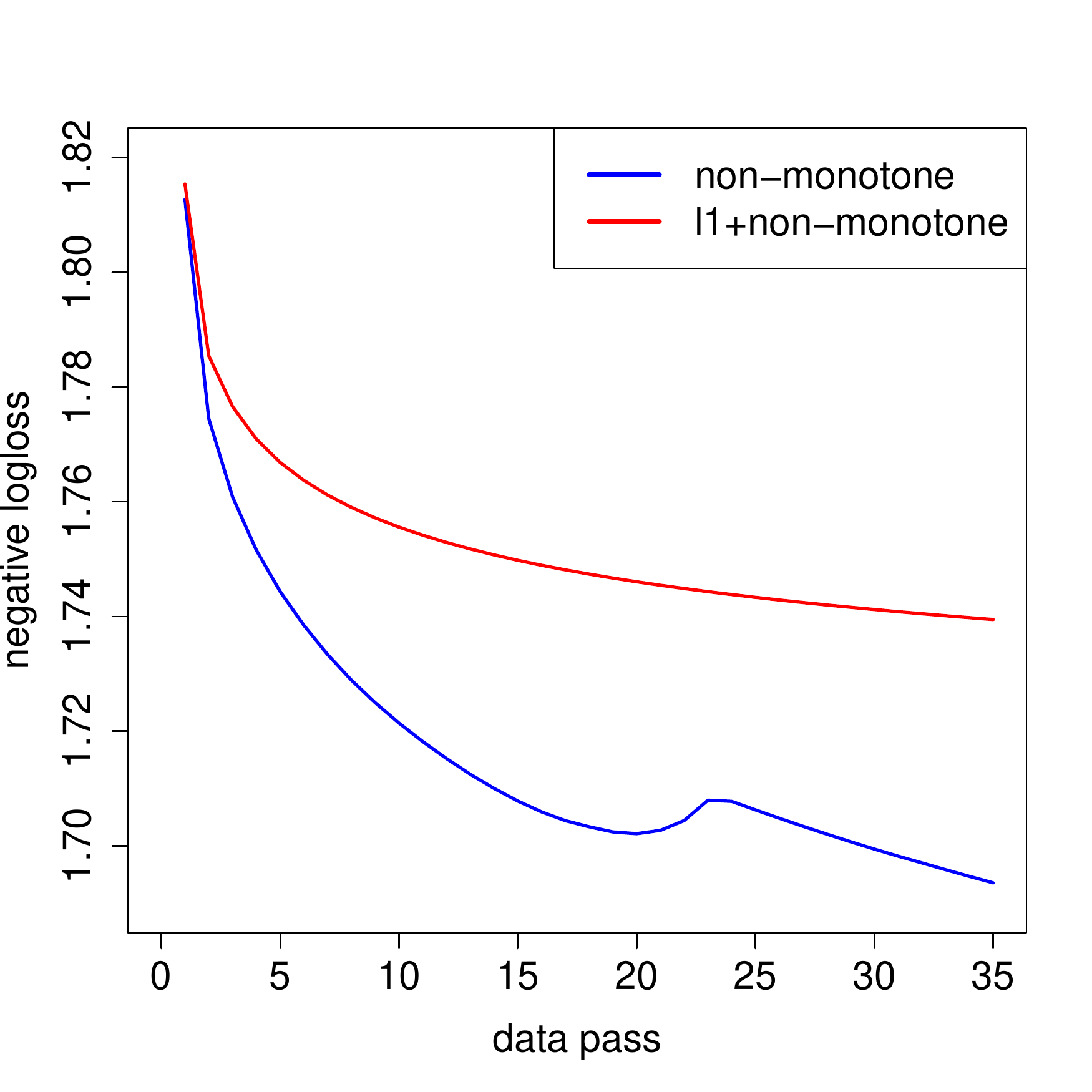}
		\includegraphics[width=0.45\textwidth]{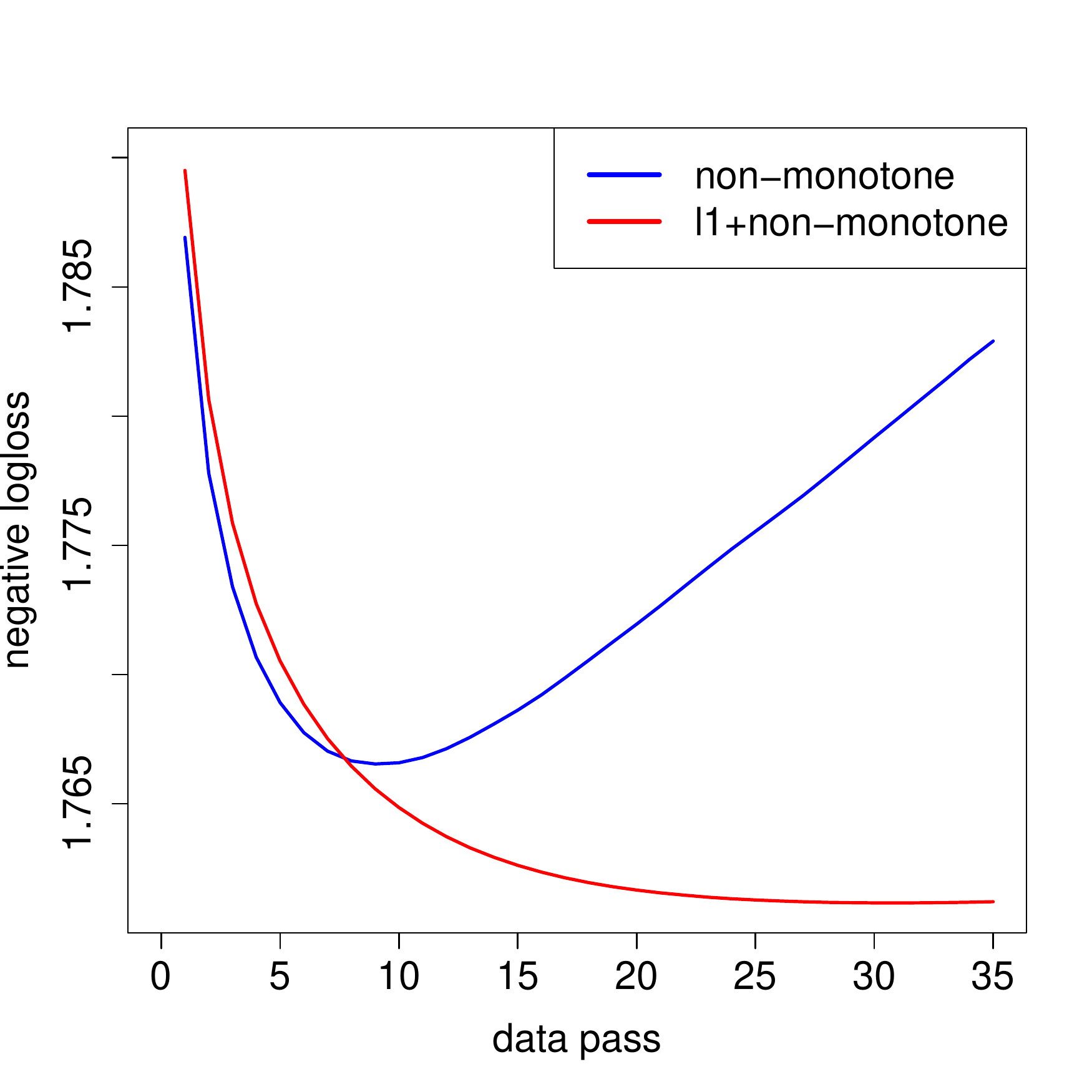}
	\caption{Convergence on Dropout data from Alipay.com: convergence on train {\bf (left)}, covergence on test {\bf (right)}.}
	\label{fig:dropout_conv}
\end{figure}
\begin{figure}[!ht]
	\centering
		\includegraphics[width=0.45\textwidth]{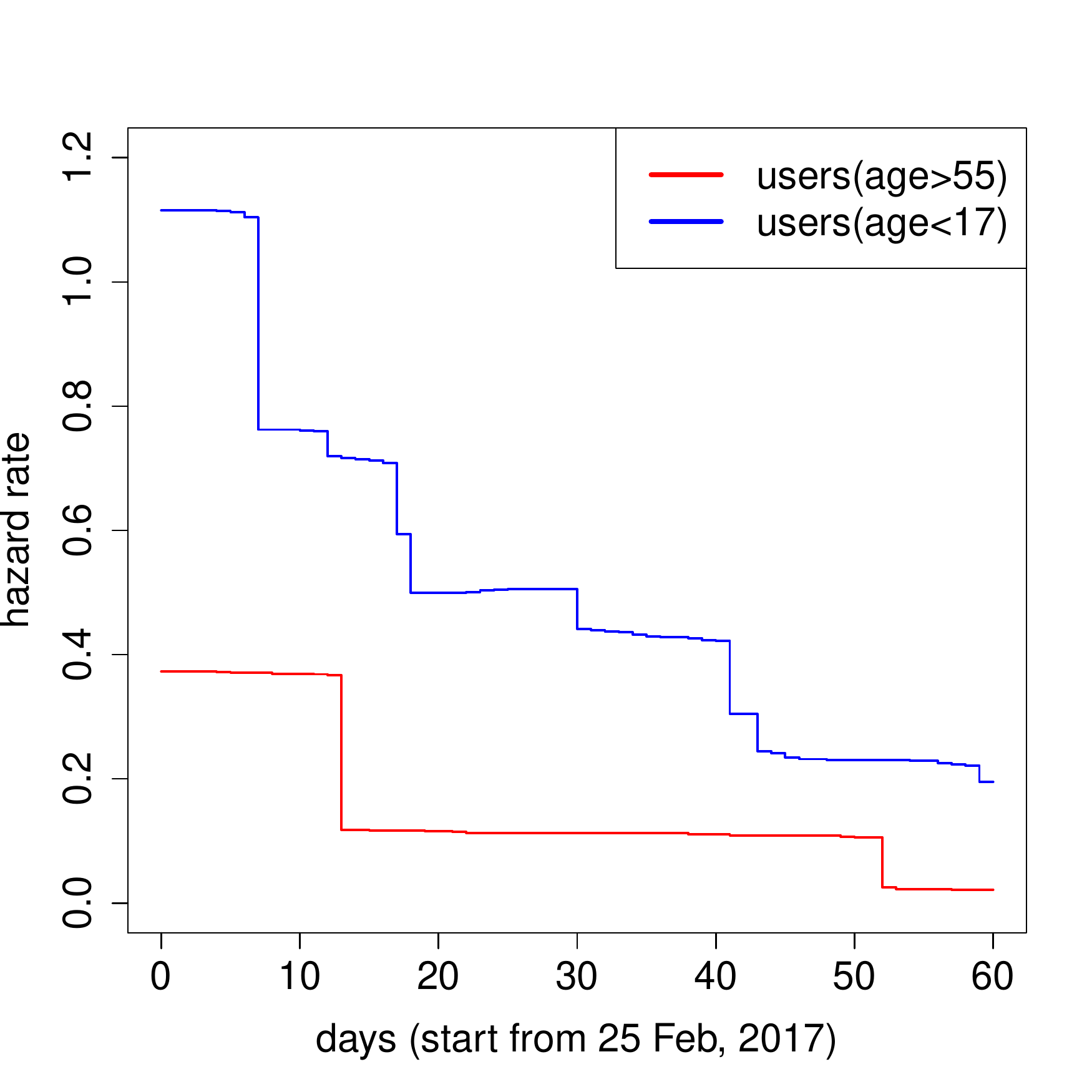}
	\caption{Hazard rate on different features related to the ages of users.}
	\label{fig:dropout_case}
\end{figure}
It is worth noting that our time-varying hazard regression model with capacity of
adaptively learning local estimates of hazard rate could be potentially useful
in other scenarios but not limited in analyzing hacking procedures. In this section,
we study another dataset from Alipay\footnote{https://www.alipay.com/}, a third-party
online payment platform that serves 400 million users in China and control of
half of China's online payment market in October 2016. Basic services provided
by ``Alipay app'' include the digital wallet and personalized investment through
various commercial funds. Here, the dropout rate of users and how does the rate
respond to campaigns promoted by Alipay is the object of interest.

We define one ``dropout'' as a user will not login in seven consecutive days.
The Alipay service would simulate or promote new campaigns for the users who
are apt to dropout.

We randomly select 3,760,455 users who were active during 20 and 26 Feb as long as
the user login Alipay app in any of these days. There are around 8 million
features representing users' demographic background and past behaviors. We observe
the users' dropout behaviors during 20 Feb and 30 Apr. The ratio of users who survive
until 30 Apr is a confidential number due to commercial privay.

We analyze the data with our ``non-monotone''
models to study the dropout rate and how does it change as new campaigns are promoted
. The convergence rate on training data and test data are shown in
\ref{fig:dropout_conv}. No surprisingly, the ``non-monotone'' model without any
constraint overfit the data severely. The ``l1+non-monotone'' model with $\|Dw\|_1$
generalized well.

It is interesting to study the curves of hazard rate associated with each feature.
We show examples in Figure~\ref{fig:dropout_case} that how the estimated hazard
rates change over time. The overall trend of users who are older than 55 and
youth were declining. We align some change points with important campaigns launched
at Alipay. On 10 Mar, Alipay announced a brand new program that every proper payment
are eligible for a cashback. On 18 Apr, Alipay announced another brand
new healthcare program that each payment can accumulate users' own health insurance
amount. The two change points along ``red'' line in figure~\ref{fig:dropout_case}
respond to such two campaigns quite clearly. On the other hand,
we observed a sharp decline around 5 March among
youth users, this is probably because of the consecutive warm-up campaigns
for 8 March shopping
festival. That means the youth are interested in consuming rather
than saving compared with the habits of older users. Such analyses would be
useful for understanding the strategies
of promotion in the near future.

\section{Conclusion}
In this paper, we propose a novel survival analysis-based approach to model the
latent process of websites getting hacked over time. The proposed model attempts
to solve a variational total variation penalized optimization problem, and we
show that the optimal function can be linearly represented by a set of step
functions with the jump points known to as ahead of time. This allows us to
solve the problem by either Lasso or fused lasso efficiently using proximal
stochastic variance-reduced gradient algorithm. The results suggest that the
model significantly outperforms the classic Cox model and is highly interpretable.
Through a careful case study, we found that at least some of the active features
and jump points we discovered by fitting the model to data are indeed important
components of known vulnerability, and major jump points often clearly marks
out the life cycles of these exploits. Finally we show that our models are
much general hazard regression models and particularly suitable for responding
to external environment.

\section*{Acknowledgements}
The work was partially completed during ZL's visit to CMU from 2014 to 2016. The authors would
like to thank Ryan Tibshirani for useful discussions and for bringing our attention to the work
of \citep{parekh2015convex}, and Li Wang, Chaochao Chen in Ant Financial for providing their proprietary
data, and anonymous AISTATS reviewers for insightful comments that lead to substantial improvements of the paper.


\bibliographystyle{abbrvnat}
\bibliography{hazard}

\begin{thebibliography}{37}
\providecommand{\natexlab}[1]{#1}
\providecommand{\url}[1]{\texttt{#1}}
\expandafter\ifx\csname urlstyle\endcsname\relax
  \providecommand{\doi}[1]{doi: #1}\else
  \providecommand{\doi}{doi: \begingroup \urlstyle{rm}\Url}\fi

\bibitem[Borgolte et~al.(2013)Borgolte, Kruegel, and Vigna]{Borgolte:CCS13}
K.~Borgolte, C.~Kruegel, and G.~Vigna.
\newblock Delta: automatic identification of unknown web-based infection
  campaigns.
\newblock In \emph{ACM SIGSAC conference on Computer \& communications
  security}, pages 109--120. ACM, 2013.

\bibitem[Bradic et~al.(2012)Bradic, Song, and Diego]{Bradic2012}
J.~Bradic, R.~Song, and S.~Diego.
\newblock {Structured Estimation in Nonparameteric Cox Model}.
\newblock pages 1--34, 2012.
\newblock \doi{10.1214/15-EJS1004}.

\bibitem[Cox(1972)]{Cox1972}
D.~R. Cox.
\newblock {Regression models and life tables (with discussion)}.
\newblock \emph{Journal of the Royal Statistical Society.}, 34\penalty0
  (2):\penalty0 187--220, 1972.

\bibitem[De~Boor(1978)]{deboor1978practical}
C.~De~Boor.
\newblock \emph{A practical guide to splines}.
\newblock Springer-Verlag New York, 1978.

\bibitem[Duchi et~al.(2011)Duchi, Hazan, and Singer]{duchi2011adaptive}
J.~Duchi, E.~Hazan, and Y.~Singer.
\newblock Adaptive subgradient methods for online learning and stochastic
  optimization.
\newblock \emph{Journal of Machine Learning Research}, 12\penalty0
  (Jul):\penalty0 2121--2159, 2011.

\bibitem[Google()]{safebrowsing}
Google.
\newblock ``google safe browsing api''.
\newblock URL \url{https://code.google.com/apis/safebrowsing/}.

\bibitem[Hastie and Tibshirani(1990)]{hastie1990generalized}
T.~J. Hastie and R.~J. Tibshirani.
\newblock \emph{Generalized additive models}, volume~43.
\newblock CRC press, 1990.

\bibitem[Invernizzi and Comparetti(2012)]{Invernizzi:S&P12}
L.~Invernizzi and P.~M. Comparetti.
\newblock Evilseed: A guided approach to finding malicious web pages.
\newblock In \emph{IEEE Symposium on Security and Privacy}, pages 428--442.
  IEEE, 2012.

\bibitem[Johnson(2013)]{johnson2013dynamic}
N.~A. Johnson.
\newblock A dynamic programming algorithm for the fused lasso and
  $l_0$-segmentation.
\newblock \emph{Journal of Computational and Graphical Statistics}, 22\penalty0
  (2):\penalty0 246--260, 2013.

\bibitem[Johnson and Zhang(2013)]{Johnson2013}
R.~Johnson and T.~Zhang.
\newblock Accelerating stochastic gradient descent using predictive variance
  reduction.
\newblock In \emph{Advances in Neural Information Processing Systems
  (NIPS-13)}, 2013.

\bibitem[Kim et~al.(2009)Kim, Koh, Boyd, and Gorinevsky]{Kim2009}
S.-j. Kim, K.~Koh, S.~Boyd, and D.~Gorinevsky.
\newblock {L1 trend filtering}.
\newblock \emph{SIAM Review}, 51\penalty0 (2):\penalty0 339--360, 2009.

\bibitem[Klein and Moeschberger(2005)]{klein2005survival}
J.~P. Klein and M.~L. Moeschberger.
\newblock \emph{Survival analysis: techniques for censored and truncated data}.
\newblock Springer Science \& Business Media, 2005.

\bibitem[Kooperberg et~al.(1994)Kooperberg, Stone, and Truong]{Kooperberg1994}
C.~Kooperberg, C.~Stone, and Y.~Truong.
\newblock {Hazard Regression}, 1994.

\bibitem[Leontiadis et~al.(2014)Leontiadis, Moore, and
  Christin]{Leontiadis2014}
N.~Leontiadis, T.~Moore, and N.~Christin.
\newblock {A Nearly Four-Year Longitudinal Study of Search-Engine Poisoning}.
\newblock In \emph{ACM SIGSAC Conference on Computer and Communications
  Security}. ACM, 2014.

\bibitem[Mammen et~al.(1997)Mammen, van~de Geer, et~al.]{mammen1997locally}
E.~Mammen, S.~van~de Geer, et~al.
\newblock Locally adaptive regression splines.
\newblock \emph{The Annals of Statistics}, 25\penalty0 (1):\penalty0 387--413,
  1997.

\bibitem[Mavrommatis and Monrose(2008)]{Provos:Security08}
N.~P.~P. Mavrommatis and M.~A. R.~F. Monrose.
\newblock All your iframes point to us.
\newblock In \emph{USENIX Security Symposium}, pages 1--16, 2008.

\bibitem[McAfee()]{siteadvisor}
McAfee.
\newblock ``site advisor''.
\newblock URL \url{http://www.siteadvisor.com/}.

\bibitem[Norton()]{NortonSafeWeb}
Norton.
\newblock ``norton safe web''.
\newblock URL \url{http://safeweb.norton.com}.

\bibitem[Parekh and Selesnick(2015)]{parekh2015convex}
A.~Parekh and I.~W. Selesnick.
\newblock Convex fused lasso denoising with non-convex regularization and its
  use for pulse detection.
\newblock In \emph{Signal Processing in Medicine and Biology Symposium (SPMB),
  2015 IEEE}, pages 1--6. IEEE, 2015.

\bibitem[Perperoglou(2013)]{perperoglou2013reduced}
A.~Perperoglou.
\newblock Reduced rank hazard regression with fixed and time-varying effects of
  the covariates.
\newblock \emph{Biometrical Journal}, 55\penalty0 (1):\penalty0 38--51, 2013.

\bibitem[Perperoglou(2014)]{perperoglou2014cox}
A.~Perperoglou.
\newblock Cox models with dynamic ridge penalties on time-varying effects of
  the covariates.
\newblock \emph{Statistics in medicine}, 33\penalty0 (1):\penalty0 170--180,
  2014.

\bibitem[Reddi et~al.(2016)Reddi, Sra, Poczos, and Smola]{reddi2016fast}
S.~J. Reddi, S.~Sra, B.~Poczos, and A.~Smola.
\newblock Fast stochastic methods for nonsmooth nonconvex optimization.
\newblock In \emph{Advances in Neural Information Processing Systems
  (NIPS-16)}, 2016.

\bibitem[Rockafellar(2015)]{rockafellar2015convex}
R.~T. Rockafellar.
\newblock \emph{Convex analysis}.
\newblock Princeton university press, 2015.

\bibitem[Sadhanala and Tibshirani(2017)]{sadhanala2017additive}
V.~Sadhanala and R.~J. Tibshirani.
\newblock Additive models with trend filtering.
\newblock \emph{arXiv preprint arXiv:1702.05037}, 2017.

\bibitem[Sadhanala et~al.(2016)Sadhanala, Wang, and
  Tibshirani]{sadhanala2016total}
V.~Sadhanala, Y.-X. Wang, and R.~J. Tibshirani.
\newblock Total variation classes beyond 1d: Minimax rates, and the limitations
  of linear smoothers.
\newblock In \emph{Advances in Neural Information Processing Systems}, pages
  3513--3521, 2016.

\bibitem[Sauerbrei et~al.(2007)Sauerbrei, Royston, and Look]{sauerbrei2007new}
W.~Sauerbrei, P.~Royston, and M.~Look.
\newblock A new proposal for multivariable modelling of time-varying effects in
  survival data based on fractional polynomial time-transformation.
\newblock \emph{Biometrical Journal}, 49\penalty0 (3):\penalty0 453--473, 2007.

\bibitem[Soska and Christin(2014)]{Soska2014}
K.~Soska and N.~Christin.
\newblock {Automatically detecting vulnerable websites before they turn
  malicious}.
\newblock \emph{USENIX Security Symposium}, 2014.

\bibitem[Therneau and Grambsch(2013)]{therneau2013modeling}
T.~M. Therneau and P.~M. Grambsch.
\newblock \emph{Modeling survival data: extending the Cox model}.
\newblock Springer Science \& Business Media, 2013.

\bibitem[Therneau and Lumley(2017)]{therneau2017package}
T.~M. Therneau and T.~Lumley.
\newblock Package ‘survival’, 2017.

\bibitem[Tibshirani(1997)]{Tibshirani1997}
R.~Tibshirani.
\newblock {the Lasso Method for Variable Selection in the Cox Model}.
\newblock \emph{Statistics in Medicine}, 16\penalty0 (4):\penalty0 385--395,
  1997.

\bibitem[Tibshirani et~al.(2005)Tibshirani, Saunders, Rosset, Zhu, and
  Knight]{tibshirani2005sparsity}
R.~Tibshirani, M.~Saunders, S.~Rosset, J.~Zhu, and K.~Knight.
\newblock Sparsity and smoothness via the fused lasso.
\newblock \emph{Journal of the Royal Statistical Society: Series B (Statistical
  Methodology)}, 67\penalty0 (1):\penalty0 91--108, 2005.

\bibitem[Tibshirani(2014)]{Tibshirani2014}
R.~J. Tibshirani.
\newblock {Adaptive piecewise polynomial estimation via trend filtering}.
\newblock \emph{The Annals of Statistics}, 42\penalty0 (1):\penalty0 285--323,
  2014.

\bibitem[Verweij and van Houwelingen(1995)]{verweij1995time}
P.~J. Verweij and H.~C. van Houwelingen.
\newblock Time-dependent effects of fixed covariates in cox regression.
\newblock \emph{Biometrics}, pages 1550--1556, 1995.

\bibitem[Wang et~al.(2014)Wang, Smola, and Tibshirani]{wang2014falling}
Y.-X. Wang, A.~Smola, and R.~Tibshirani.
\newblock The falling factorial basis and its statistical applications.
\newblock In \emph{International Conference on Machine Learning (ICML-14)},
  2014.

\bibitem[Wang et~al.(2016)Wang, Sharpnack, Smola, and
  Tibshirani]{wang2016trend}
Y.-X. Wang, J.~Sharpnack, A.~J. Smola, and R.~J. Tibshirani.
\newblock Trend filtering on graphs.
\newblock \emph{The Journal of Machine Learning Research}, 17\penalty0
  (1):\penalty0 3651--3691, 2016.

\bibitem[Woodward(2013)]{woodward2013epidemiology}
M.~Woodward.
\newblock \emph{Epidemiology: study design and data analysis}.
\newblock CRC press, 2013.

\bibitem[Yu(2013)]{yu2013decomposing}
Y.-L. Yu.
\newblock On decomposing the proximal map.
\newblock In \emph{Advances in Neural Information Processing Systems}, pages
  91--99, 2013.

\end{thebibliography}


\begin{thebibliography}{22}
\providecommand{\natexlab}[1]{#1}
\providecommand{\url}[1]{\texttt{#1}}
\expandafter\ifx\csname urlstyle\endcsname\relax
  \providecommand{\doi}[1]{doi: #1}\else
  \providecommand{\doi}{doi: \begingroup \urlstyle{rm}\Url}\fi

\bibitem[Borgolte et~al.(2013)Borgolte, Kruegel, and Vigna]{Borgolte:CCS13}
Borgolte, Kevin, Kruegel, Christopher, and Vigna, Giovanni.
\newblock Delta: automatic identification of unknown web-based infection
  campaigns.
\newblock In \emph{ACM SIGSAC conference on Computer \& communications
  security}, pp.\  109--120. ACM, 2013.

\bibitem[Buchholz et~al.()Buchholz, Sauerbrei, and Royston]{buchholzapproaches}
Buchholz, Anika, Sauerbrei, Willi, and Royston, Patrick.
\newblock On approaches for modelling time-varying effects in survival
  analysis.

\bibitem[Cox(1972)]{Cox1972}
Cox, David~R.
\newblock {Regression models and life tables (with discussion)}.
\newblock \emph{Journal of the Royal Statistical Society.}, 34\penalty0
  (2):\penalty0 187--220, 1972.

\bibitem[Cox(1975)]{Cox1975}
Cox, David~R.
\newblock Partial likelihood.
\newblock \emph{Biometrika}, 62\penalty0 (2):\penalty0 269--276, 1975.

\bibitem[De~Boor(1978)]{deboor1978practical}
De~Boor, Carl.
\newblock \emph{A practical guide to splines}.
\newblock Springer-Verlag New York, 1978.

\bibitem[Fan \& Li(2001)Fan and Li]{fan2001variable}
Fan, Jianqing and Li, Runze.
\newblock Variable selection via nonconcave penalized likelihood and its oracle
  properties.
\newblock \emph{Journal of the American statistical Association}, 96\penalty0
  (456):\penalty0 1348--1360, 2001.

\bibitem[Google()]{safebrowsing}
Google.
\newblock Google {Safe Browsing API}.
\newblock URL \url{https://code.google.com/apis/safebrowsing/}.

\bibitem[Invernizzi \& Comparetti(2012)Invernizzi and
  Comparetti]{Invernizzi:S&P12}
Invernizzi, Luca and Comparetti, Paolo~Milani.
\newblock Evilseed: A guided approach to finding malicious web pages.
\newblock In \emph{IEEE Symposium on Security and Privacy}, pp.\  428--442.
  IEEE, 2012.

\bibitem[Johnson(2013)]{johnson2013dynamic}
Johnson, Nicholas~A.
\newblock A dynamic programming algorithm for the fused lasso and
  $l_0$-segmentation.
\newblock \emph{Journal of Computational and Graphical Statistics}, 22\penalty0
  (2):\penalty0 246--260, 2013.

\bibitem[Johnson \& Zhang(2013)Johnson and Zhang]{Johnson2013}
Johnson, Rie and Zhang, Tong.
\newblock Accelerating stochastic gradient descent using predictive variance
  reduction.
\newblock In \emph{Advances in Neural Information Processing Systems
  (NIPS-13)}, pp.\  315--323, 2013.

\bibitem[Kim et~al.(2009)Kim, Koh, Boyd, and Gorinevsky]{Kim2009}
Kim, Seung-jean, Koh, Kwangmoo, Boyd, Stephen, and Gorinevsky, Dimitry.
\newblock {L1 trend filtering}.
\newblock \emph{SIAM Review}, 51\penalty0 (2):\penalty0 339--360, 2009.

\bibitem[Kooperberg et~al.(1994)Kooperberg, Stone, and Truong]{Kooperberg1994}
Kooperberg, Charles, Stone, Charles, and Truong, Young.
\newblock {Hazard Regression}, 1994.

\bibitem[Leontiadis et~al.(2014)Leontiadis, Moore, and
  Christin]{Leontiadis2014}
Leontiadis, Nektarios, Moore, Tyler, and Christin, Nicolas.
\newblock {A Nearly Four-Year Longitudinal Study of Search-Engine Poisoning}.
\newblock In \emph{ACM SIGSAC Conference on Computer and Communications
  Security}, pp.\  930--941. ACM, 2014.

\bibitem[Mammen et~al.(1997)Mammen, van~de Geer, et~al.]{mammen1997locally}
Mammen, Enno, van~de Geer, Sara, et~al.
\newblock Locally adaptive regression splines.
\newblock \emph{The Annals of Statistics}, 25\penalty0 (1):\penalty0 387--413,
  1997.

\bibitem[Mavrommatis \& Monrose(2008)Mavrommatis and
  Monrose]{Provos:Security08}
Mavrommatis, Niels Provos~Panayiotis and Monrose, Moheeb Abu Rajab~Fabian.
\newblock All your iframes point to us.
\newblock In \emph{USENIX Security Symposium}, pp.\  1--16, 2008.

\bibitem[McAfee()]{siteadvisor}
McAfee.
\newblock ``site advisor''.
\newblock URL \url{http://www.siteadvisor.com/}.

\bibitem[Norton()]{NortonSafeWeb}
Norton.
\newblock ``norton safe web''.
\newblock URL \url{http://safeweb.norton.com}.

\bibitem[Perperoglou(2013)]{perperoglou2013reduced}
Perperoglou, Aris.
\newblock Reduced rank hazard regression with fixed and time-varying effects of
  the covariates.
\newblock \emph{Biometrical Journal}, 55\penalty0 (1):\penalty0 38--51, 2013.

\bibitem[Perperoglou(2014)]{perperoglou2014cox}
Perperoglou, Aris.
\newblock Cox models with dynamic ridge penalties on time-varying effects of
  the covariates.
\newblock \emph{Statistics in medicine}, 33\penalty0 (1):\penalty0 170--180,
  2014.

\bibitem[Reddi et~al.(2016)Reddi, Sra, Poczos, and Smola]{reddi2016fast}
Reddi, Sashank~J, Sra, Suvrit, Poczos, Barnabas, and Smola, Alex.
\newblock Fast stochastic methods for nonsmooth nonconvex optimization.
\newblock In \emph{Advances in Neural Information Processing Systems
  (NIPS-16)}, 2016.

\bibitem[Sauerbrei et~al.(2007)Sauerbrei, Royston, and Look]{sauerbrei2007new}
Sauerbrei, Willi, Royston, Patrick, and Look, Maxime.
\newblock A new proposal for multivariable modelling of time-varying effects in
  survival data based on fractional polynomial time-transformation.
\newblock \emph{Biometrical Journal}, 49\penalty0 (3):\penalty0 453--473, 2007.

\bibitem[Soska \& Christin(2014)Soska and Christin]{Soska2014}
Soska, Kyle and Christin, Nicolas.
\newblock {Automatically detecting vulnerable websites before they turn
  malicious}.
\newblock \emph{USENIX Security Symposium}, 2014.

\bibitem[Tibshirani(1997)]{Tibshirani1997}
Tibshirani, Robert.
\newblock {the Lasso Method for Variable Selection in the Cox Model}.
\newblock \emph{Statistics in Medicine}, 16\penalty0 (4):\penalty0 385--395,
  1997.

\bibitem[Tibshirani(2014)]{Tibshirani2014}
Tibshirani, Ryan~J.
\newblock {Adaptive piecewise polynomial estimation via trend filtering}.
\newblock \emph{The Annals of Statistics}, 42\penalty0 (1):\penalty0 285--323,
  2014.

\bibitem[Verweij \& van Houwelingen(1995)Verweij and van
  Houwelingen]{verweij1995time}
Verweij, Pierre~JM and van Houwelingen, Hans~C.
\newblock Time-dependent effects of fixed covariates in cox regression.
\newblock \emph{Biometrics}, pp.\  1550--1556, 1995.

\bibitem[Wang et~al.(2014)Wang, Smola, and Tibshirani]{wang2014falling}
Wang, Yu-Xiang, Smola, Alex, and Tibshirani, Ryan.
\newblock The falling factorial basis and its statistical applications.
\newblock In \emph{International Conference on Machine Learning (ICML-14)},
  pp.\  730--738, 2014.

\bibitem[Woodward(2013)]{woodward2013epidemiology}
Woodward, Mark.
\newblock \emph{Epidemiology: study design and data analysis}.
\newblock CRC press, 2013.

\bibitem[Zhang(2010)]{zhang2010analysis}
Zhang, Tong.
\newblock Analysis of multi-stage convex relaxation for sparse regularization.
\newblock \emph{The Journal of Machine Learning Research}, 11:\penalty0
  1081--1107, 2010.

\end{thebibliography}
\appendix
\section{Proofs of technical results.}
\begin{proof}[Proof of Theorem~\ref{thm:repre}]
	Let the feature $j$ of user $i$ at time $t$ be $x_{ij}(t)$ and nonnegative, the hazard function for user $i$
	$$\lambda(t) = \sum_{j=1}^d   x_{ij}(t) w_{j}(t)$$
	and the cumulative hazard function
	\begin{align}
	\Lambda(t) &= \int_{-\infty}^t\sum_{j=1}^d  x_{ij}(t) w_{j}(t) dt = \sum_{j=1}^d \int_{-\infty}^t x_{ij}(t) w_{j}(t) dt \nonumber\\
	&= \sum_{j=1}^d \left(\sum_{\tau\in \cT_j, \tau\leq t}\alpha_{j,\tau} W_j(\tau)   +  \alpha_{j,t} W_j(t)\right).\label{eq:paramerize_Lambda}
	\end{align}
	where $W_j(t) = \int_{-\infty}^t w_{j}(t) dt$, $\cT_j$ denotes all break points of the piecewise constant $x_{ij}(t)$ and $\alpha_{j,\tau}$ are coefficients that depends only on $x_{ij}$.
	When there are no uncensored observations, we can re-parameterize the above variational optimization problems \eqref{eq:variational} using the $\Lambda(t)$ hence $W_j(t)$ alone:
	\begin{equation}\label{eq:variational_cumulative}
	\begin{aligned}
	&\minimize_{(W_0,W_1,...,W_d) \in \cF^{d}} && \cL(\{\bm{\tau},\bm{\Psi},\bm{Z}\},\bm{W}) + \gamma \sum_{j=0}^d \TV(\frac{\partial}{\partial t}W_j)\\
	&\text{subject to } &&W_j(t)\geq 0, W_j(t+\delta) - W_j(t) \geq 0 \text{ for any } j\in [d],t\in \R, \delta\in \R_+.\\
	&&& \text{(if monotone) } W_j \text{ is convex, for any } j\in [d].  \\
	\end{aligned}
	\end{equation}
	Let $\cT$ be the set of observed time points (including $0,T$ and all censored interval boundaries).
	For each $j\in[d]$, let $W_j^*$ be the optimal solution to \eqref{eq:variational_cumulative}.
	
	By Proposition 7 of \citet{mammen1997locally}, we know that for each $j$, there is a spline $\tilde{W}_j$ of order $1$ such that 
	\begin{equation}\label{eq:spline_constraints}
	\left\{ \begin{aligned}
	& \text{All knots of the spline are contained in }\cT\backslash\{0,T\}\\
	&\tilde{W}_j(\tau) = W_j^*(\tau) \text{ for all } \tau\in \cT\\
	&\TV(\frac{\partial}{\partial t}\tilde{W}_j) \leq \TV(\frac{\partial}{\partial t}W^*_j)
	\end{aligned} \right.
	\end{equation}
	We will now show that $\tilde{W}_j$ also defines a set of optimal solution using these properties.
	
	Note that the loss function $\cL(\{\bm{\tau},\bm{\Psi},\bm{Z}\},\bm{W})$ can be decomposed into the sum of negative log-probability of form as described in 
	\eqref{eq:cumulative_repre}, and when there are no uncensored data, the value of the loss function is completely determined by the survival function $S(t)$ evaluated at $t\in \cT$. 
	There is a one-to-one mapping between survival functions and the cumulative hazard functions through
	$
	S(t) = \exp(-\Lambda(t)).
	$
	It follows from \eqref{eq:paramerize_Lambda} that $\cL(\{\bm{\tau},\bm{\Psi},\bm{Z}\},\bm{W})$ is a function of $\bm{W}$ only through its evaluations at $\bm{W}(\cT)$, therefore 
	$$\cL(\{\bm{\tau},\bm{\Psi},\bm{Z}\},\bm{\tilde{W}}) = \cL(\{\bm{\tau},\bm{\Psi},\bm{Z}\},\bm{W}^*).$$
	By $\TV(\frac{\partial}{\partial t}\tilde{W}_j) \leq \TV\frac{\partial}{\partial t}(W^*_j)$,	 we know that $\bm{\tilde{W}}$ has a smaller overall objective function than the optimal solution.
	
	It remains to show that $\bm{\tilde{W}}$ is feasible. First note that the only spline of order $1$ that satisfy the first and second condition is the piecewise linear interpolation of $W_j^{*}(\tau)$ the knots in $\cT$. 
	For each $j$, the constraints require that $W^*_j$ obeys that $W^*_j$ is non-negative, non-decreasing. This ensures that the piecewise linear interpolation of any subset of points in the domain of $W^*_j$ to be also nonnegative, monotonically nondecreasing, which ensures the feasibility of $\tilde{W}_j$.
	
	In the monotone model, the monotonic constraints on $w_j$ translates into a condition that says $W_j$ is convex. Since $W_j^*$ is feasible then it is convex. A piecewise linear interpolation of points on a convex function is also convex. This follows directly by checking that all the sublevel sets are convex sets.
	
	Finally, $\tilde{W}_j$ can be represented by a nonnegative linear combination of truncated power basis functions defined on $\cT$ and the corresponding hazard function $w_j$ can be represented by the same nonnegative combination of step functions defined at $\cT$. This completes the proof.
\end{proof}

\begin{proof}[Proof of Proposition~\ref{prop:prox_decomp}.]

	Define proximal operator $\text{Prox}_{f}(x)  :=  \argmin_{y} 0.5\|x-y\|^2 + f(y)$ for function $f$. 
	Theorem 1 of \citet{yu2013decomposing} states that 
	$
	\text{Prox}_{f+g}  =  \text{Prox}_f \circ \text{Prox}_g
	$
	if for any $x$, 
	\begin{equation}\label{eq:yu_condition}
	\partial g(\text{Prox}_f(x)) \supseteq \partial g(x).
	\end{equation}
	We will verify this condition for the ``Standard model'', namely, the case when $f(x) = \delta(x\geq 0)$ and $g(x) = \gamma\|Dx\|_1$ or $g(x) = \gamma\|Dx\|_1 + \delta(Dx \geq 0)$.
	
	In the first case 
	$$
	\partial g(x) =  \gamma D^T \left\{ u\in \R^{n-1} \middle| \begin{aligned} &u_i =  \sign(x_{i+1}-x_i)&& \text{ if } |[Dx]_i| >0\\
	& -1\leq u_i\leq 1 && \text{ otherwise;}\end{aligned}  \text{ for }i=1,...,n-1 \right\}
	$$
	It suffices to verify that for each $i$, if $[Dx]_i >0$ then $[D\text{Prox}_f(x)]_i\geq 0$ and if $[Dx]_i <0$ then $[D\text{Prox}_f(x)]_i\leq 0$.
	
	$[Dx]_i=  x_{i+1}-x_i$. $\text{Prox}_f(x)$ is simply the projection to the nonnegative cone. There are only for cases of $x_{i+1}$ and $x_i$. When at least one of them is positive, the projection does not change the sign of $x_{i+1}-x_i$, so the condition easily checks out. When both of them are not positive, $[D\text{Prox}_f(x)]_i = 0$ which only enlarges the subdifferential set. Therefore, \citeauthor{yu2013decomposing}'s condition \eqref{eq:yu_condition} is true for $f(x)=\delta(x\geq 0)$ and $g(x)= \gamma\|Dx\|_1$ and the proximal operator decomposes. 
	
	We now move on to check \eqref{eq:yu_condition} for the ``monotone model'', in this case $g(x) = \gamma\|Dx\|_1 + \delta(Dx\geq 0)$.
	\begin{align*}
	\partial g(x) &=  \partial_x\|Dx\|_1 + \partial_x \delta(Dx \geq 0) \\
	&= D^T \left\{ \gamma u+v  \middle| u\in \cS_u,  v\in  \cS_v \right\}
	\end{align*}
	where $\cS_1(x) = \left\{ u\in \R^{n-1} \middle| \begin{aligned} &u_i =  \sign(x_{i+1}-x_i)&& \text{ if } |[Dx]_i| >0\\
	& -1\leq u_i\leq 1 && \text{ otherwise;}\end{aligned}  \text{ for }i=1,...,n-1 \right\}$
	and the subdifferential of indicator function is
	$$\cS_2(x) = \begin{cases}	
	\left\{ v\in \R^{n-1} \middle| v^TDx \geq v^Ty \text{ for any }y\geq 0 \right\} & \text{ if } Dx\geq 0,\\
	\emptyset & \text{ otherwise. }\end{cases}
	$$
	When $x$ is feasible, the subdifferential is known as the ``normal cone''. 
	
	First of all, we have already shown that if $u\in \cS_1(x)$ then $u\in \cS_1(\text{Prox}_f(x))$.
	It remains to check that if $\cS_2(x)\subseteq \cS_2(\text{prox}_f(x))$.
	First of all if $x$ is not feasible, then the inclusion holds trivially. If $x$ is feasible, namely, $Dx\geq 0$, then we need to show that if $v\in \cS_2(x)$, then $v\in \cS(\text{prox}_f(x))$.
	
	The condition for $v\in \cS_2(x)$ also decomposes to every coordinate, since
	$$
	\sup_{y\geq 0} \langle v,y\rangle = \begin{cases}
	\infty, & \text{ if } \max_{i} v_i > 0\\
	0, &\text{ otherwise.}
	\end{cases}
	$$
	In other word, we have $v_i=0$ if $[Dx]_i > 0$, $v_i\leq 0$ if $[Dx]_i=0$ (check that this is an alternative definition of the normal cone geometrically).
	
	We now discuss the different cases of $x_{i+1}$ and $x_i$. Since $x$ is feasible,  there are only three cases of potential signs of $x_i$ and $x_{i+1}$. If $x_{i+1}$ and $x_i$ are both nonnegative, then $\text{prox}_{f}(x)=x$ and the constraint on $v_i $ remains unchanged. If $x_{i+1}$ and $x_i$ are both negative, then $[D\text{prox}_f(x)]_i = 0$ and the constraint on $v_i$ changes from possibly $v_i=0$ to $v_i\leq 0$. If $x_{i+1}\geq 0$ and $x_i<0$, then $[D\text{Prox}_f(x)]_i\geq 0$, so the constraints on $v_i$ is either unchanged  or changes from $v_i=0$ to $v_i\geq 0$.
	To conclude, in all cases, the subdifferential set is only enlarged when we replace $x$ with $\text{prox}_f(x)$. This checks \citeauthor{yu2013decomposing}'s sufficient condition \eqref{eq:yu_condition} for the monotone model and completes the proof. 
\end{proof}

\begin{proof}[Proof of Lemma~\ref{lem:tv-log2tv}]
	We prove by a direct calculation. 
	$$
	\xi(w) =  {\tilde{\TV}}_{\log}^\epsilon(w) - \frac{\|Dw\|_1}{\epsilon}
	$$
	always exists. It remains to prove differentiability. For any $h\in \R^{|\cT|}$, for convenience in notation, we do a change of variable and define $u:=Dw$ and $v:=Dh$.
	By Taylor's theorem, there exists $0 \leq \eta_i \leq |u_i+v_i|-|v_i|$ such that 
	$$\log(\epsilon + |u_i| + |u_i+v_i|-|u_i|)= \log(\epsilon + |u_i|) + \frac{|u_i+v_i|-|u_i|}{\epsilon + |u_i| } - \frac{(|u_i+v_i|-|u_i|)^2}{2(\epsilon + |u_i| + \eta_i)^2}.$$

	\begin{align*}
	\xi(w+h) &= \sum_{i=1}^{|\cT|-1} \left[\log(\epsilon +|u+v|_i)  - \frac{|u_i+v_i|}{\epsilon}\right]\\
	&=\sum_{i=1}^{|\cT|-1}\left[\log(\epsilon + |u_i|) + \frac{|u_i+v_i|-|u_i|}{\epsilon + |u_i| } - \frac{(|u_i+v_i|-|u_i|)^2}{2(\epsilon + |u_i| + \eta_i)^2} - \frac{|u_i+v_i|-|u_i|}{\epsilon} - \frac{|u_i|}{\epsilon}\right]\\
	&=\xi(w) + \sum_{i=1}^{|\cT|-1}\left[ \frac{|u_i+v_i|-|u_i|}{\epsilon + |u_i| }  - \frac{|u_i+v_i|-|u_i|}{\epsilon}  - \frac{(|u_i+v_i|-|u_i|)^2}{2(\epsilon + |u_i| + \eta_i)^2}\right]
	\end{align*}
	
	For any $w$, we decompose the coordinate of $Dw$ into $\cS=\{i\in[|\cT|-1] \;|\;  [Dw]_i=0\}$ and its complement $\cS^c$, and we look at the above summation. If $i\in \cS$, we get
	$$
	\left[\frac{|v_i|}{\epsilon} -\frac{|v_i|}{\epsilon} + \frac{(|v_i|)^2}{2(\epsilon+ \eta_i)^2}\right]  = \frac{(|v_i|)^2}{2(\epsilon+ \eta_i)^2} = O(|v_i|^2).
	$$

	Now we take limit. 
	\begin{align*}
	\lim_{h\rightarrow 0}\frac{\xi(w+h) - \xi(h)}{\|h\|} &= \lim_{h\rightarrow 0}\left[\sum_{i\in\cS^c}\frac{ \frac{|u_i+v_i|-|u_i|}{\epsilon + |u_i| }  - \frac{|u_i+v_i|-|u_i|}{\epsilon}}{\|h\|} +  \frac{O(|\cT|[Dh]_i|^2)}{\|h\|} \right]\\
	&= \lim_{h\rightarrow 0}\left[\sum_{i\in\cS^c}\frac{ \frac{|[D(w+h)]_i|-|[Dw]_i|}{\epsilon + |Dw_i| }  - \frac{|[D(w+h)]_i|-|[Dw]_i|}{\epsilon}}{\|h\|}\right]\\
	&=\sum_{i\in\cS^c} \left(\frac{1}{\epsilon+|Dw|}-\frac{1}{\epsilon}\right)\lim_{h\rightarrow 0}  \frac{|[D(w+h)]_i|-|[Dw]_i|}{\|h\|}
	\end{align*}

	Note that $[D w]_i\neq 0$, therefore $|[Dw]_i|= |\mathbf{e}_iDw|$ is  differentiable in $w$ and 
	$$
	\lim_{h\rightarrow 0}  \frac{|[D(w+h)]_i|-|[Dw]_i|}{\|h\|} = \langle \frac{h}{\|h\|}, \frac{\partial}{\partial w}|\mathbf{e}_iDw|\rangle = \langle \frac{h}{\|h\|}, D^T\mathbf{e}_i\sign([Dw]_i)\rangle
	$$
	
	In other word, we have
	$$
	\lim_{h\rightarrow 0} \frac{ \xi(w+h)-\xi(x) - \langle h, D\diag(\frac{1}{\epsilon+|Dw|}-\frac{1}{\epsilon})\sign(Dw)\rangle}{\|h\|} = 0
	$$
	which checks the definition of the differentiability for multivariate functions and the gradient is 
	$$D\diag(\frac{1}{\epsilon+|Dw|}-\frac{1}{\epsilon})\sign(Dw)$$
	as claimed.
	Since the gradient is a Lipschitz function in $w$, we conclude that $\xi$ is continuously differentiable.
\end{proof}

\end{document}